\documentclass[10pt, a4paper, twocolumn]{IEEEtran}



\usepackage[numbers]{natbib}

\usepackage[USenglish]{babel}

\usepackage{amsmath,amsfonts}
\usepackage{amsthm}  {
       \theoremstyle{plain}
 			\newtheorem{definition}{Definition}
 			\newtheorem{theorem}{Theorem}
 			\newtheorem{assumption}{Assumption}
}

\usepackage{graphicx}
\usepackage{subcaption}
\usepackage{booktabs}
\usepackage{multirow}


\newcommand{\Ti}{\mathcal{T}} 
\newcommand{\St}{\mathcal{S}} 
\newcommand{\X}{\mathcal{X}} 
\newcommand{\M}{\mathcal{M}} 
\newcommand{\U}{\mathcal{U}} 
\newcommand{\Z}{\mathcal{Z}} 
\newcommand{\T}{\mathbb{T}} 
\newcommand{\Ob}{\mathbb{O}} 
\newcommand{\B}{\mathcal{B}} 
\newcommand{\R}{\mathbb{R}} 
\newcommand{\N}{\mathbb{N}} 
\newcommand{\E}{{\mathbb{E}}} 

\DeclareMathOperator*{\argmax}{argmax}
\DeclareMathOperator*{\argsup}{argsup}

\usepackage{algorithm}
\usepackage{algpseudocode}
\usepackage{varwidth}

\usepackage{url}
\usepackage[colorlinks=true,citecolor=red,pdftex]{hyperref} 

\begin{document}

\title{\LARGE \bf Planning for robotic exploration based on forward simulation}

\author{Mikko Lauri, Risto Ritala
\thanks{M. Lauri and R. Ritala are with Department of Automation Science and Engineering, Tampere University of Technology, P.O. Box 692, FI-33101, Tampere, Finland, {\tt\small\{mikko.lauri, risto.ritala\}@tut.fi}.}%
}
\maketitle









\begin{abstract}
We address the problem of controlling a mobile robot to explore a partially known environment.
The robot's objective is the maximization of the amount of information collected about the environment.
We formulate the problem as a partially observable Markov decision process (POMDP) with an information-theoretic objective function, and solve it applying forward simulation algorithms with an open-loop approximation.
We present a new sample-based approximation for mutual information useful in mobile robotics.
The approximation can be seamlessly integrated with forward simulation planning algorithms.
We investigate the usefulness of POMDP based planning for exploration, and to alleviate some of its weaknesses propose a combination with frontier based exploration.
Experimental results in simulated and real environments show that, depending on the environment, applying POMDP based planning for exploration can improve  performance over frontier exploration.
\end{abstract}

\section{Introduction}
\label{sec:intro}
Autonomous robotic agents performing tasks such as monitoring, surveillance or exploration must be able to plan their future information-gathering actions. 
Real-world environments are typically partially observable and stochastic, and planning in them requires reasoning over uncertain outcomes in the presence of sensor noise.
The true state of the system is hidden, and knowledge about the state is represented by a belief state, a probability density function (pdf) over the true state. 
The utility of actions is measured by an appropriate reward function, and the agent's objective is to maximize the sum of expected rewards over a specified horizon of time.
Such planning problems are instances of partially observable Markov decision processes \citep{Kaelbling1998}, or POMDPs.
 
The solution of a POMDP is a control policy, i.e.\ a mapping from belief states to actions.
To find policies for information-gathering and exploration tasks, several authors have proposed applying quantities such as entropy or mutual information as reward functions of the POMDP~\citep{Stachniss2005,Hitz2014,Charrow2014,Atanasov2014,Lauri2015}.
Although finding optimal policies for POMDPs is computationally hard (PSPACE-complete; \cite{Papadimitriou1987}), they remain an attractive modelling choice due to the ability to simultaneously handle uncertainties in the robot's action and sensing outcomes.

In this article, we address the problem of finding control policies for robotic exploration problems formulated as POMDPs.
We apply forward simulation algorithms for finding a solution to an open loop approximation of a POMDP.
This approach allows general belief-dependent reward functions and with suitable choice of algorithm can avoid discretization of the continuous planning space. 
We derive a sampling-based approximation for mutual information that can be applied in conjunction with forward simulation based planning, and describe a method for efficiently drawing the required samples.
We provide an empirical evaluation of our proposed approach in simulated and real-world exploration experiments\footnote{Our software is available at: \url{https://goo.gl/ENGkIf}}.
 
This article is organized as follows. 
Section~\ref{sec:related} provides a survey of related work and discusses the relation of our contribution to the state-of-the-art. 
In Section~\ref{sec:pomdp}, we formulate exploration as a POMDP, and discuss possible solution methods.
Section~\ref{sec:openloop} reviews two forward simulation based methods for non-myopic planning in POMDPs.
Section~\ref{sec:miapprox} introduces the sample-based approximation of mutual information suitable for robotic exploration problems, and describes a method for efficiently drawing the required samples.
Section~\ref{sec:simulation} describes the results of simulation experiments, and Section~\ref{sec:experiment} presents the software architecture for implementation of our approach and reports the results of real-world exploration trials. 
Finally, Section~\ref{sec:conclusion} provides concluding remarks.\section{Related work}
\label{sec:related}
Mobile robots typically collect information on both their internal state and the state of the environment they are interacting with. 
In simultaneous localization and mapping (SLAM) (see \citet{Durrant-Whyte2006} for a review) the robot must jointly estimate both its pose (internal state) and the map of the environment based on its actions and observations. 
Robots' information-gathering actions consist of actions that affect their pose, and hence the area covered by their sensors, and actions explicitly selecting between sensors or their operating modes. 
In the active SLAM problem \citep{Carlone2010}, the robot's actions are selected to obtain best estimates on the pose and the map. 
Thus, the active SLAM problem is an exploration or sensor selection problem. The goal is to maximize information on both the robot pose and the map.

Techniques to control mobile robot exploration may be categorized e.g.\ by whether they apply heuristic rules or formal decision theory for selection of exploration targets.
Applying heuristic rules for guiding exploration spans frontier-based approaches~\citep{Yamauchi1997}, or some next-best-view approaches such as~\cite{Gonzalez-Banos2002}.
\citet{Julia2012} classify exploration methods according to the levels of multi-robot coordination and integration with SLAM algorithms. 
They conclude that SLAM-integrated exploration performs best with regard to quality of map information. 
Their results also agree with \citet{Amigoni2008,Amigoni2010}, who found decision theoretic criteria combining both utility of exploration and its cost (e.g.\ time or distance) to be preferable if both extent of the explored area and exploration time were optimized.
In the following, we review in more detail some single-robot exploration techniques that employ decision theoretic criteria to guide the exploration process.

Information on the location of the robot and environment features, or landmarks, may be modelled by a multivariate Gaussian distribution. 
The SLAM problem can then be solved for example by applying the Extended Kalman Filter. 
Exploration with such feature-based maps was studied by \citet{Sim2005} who describe an A-optimal exploration method, i.e.\ they minimize the trace of the state covariance matrix. 
They discretize the location of the robot to a grid and plan an informative trajectory in open loop as a sequence of discrete positions via a breadth-first search. 
A similar objective function was used by \cite{Kwok2005}, adopting a model predictive control (MPC) approach for optimization over multiple time steps. 
Discretization of the action space was also applied by \cite{Kollar2008}, who applied reinforcement learning to learn parameterized robot trajectories for exploration.
A somewhat complementary approach was adopted in~\cite{Tovar2006}, where a set of candidate exploration targets were evaluated based on an utility function designed to balance exploration of unknown areas and seeing known landmarks to help maintain good localization information.
However, an explicit information-theoretic quantification of the information gain was avoided.

\citet{Martinez-Cantin2009} relaxed assumptions made in earlier work, such as discretization of the action space and the myopic optimization horizon. 
They applied a Monte Carlo search algorithm in policy space with a Gaussian process approximation of the objective function. 
The policies to evaluate were selected by minimizing the average mean square error of the state estimate consisting of robot and landmark locations.

A belief space planning approach investigated by~\cite{Indelman2014a,Indelman2015} addressed many of the limitations of earlier studies. 
A planner architecture consisting of an estimation layer and a decision layer combined with a model predictive control strategy for non-myopic planning was applied, assuming a Gaussian belief over robot and landmark poses. 
Discretization was avoided by applying a gradient descent method for computing optimal actions. Possible future measurements were treated as random variables, relaxing the assumption of maximum likelihood measurements. 
Exploration was considered in the sense that the objective function included an A-optimality criterion for state covariance.

Another body of work in exploration employs metric map representations such as occupancy grids \citep{Moravec1988}. 
\citet{Bourgault2002} combined occupancy grid mapping with feature-based SLAM and used mutual information as the reward function. 
A discretized action space was applied with myopic optimization.

Rao-Blackwellized particle filtering (RBPF) is often applied in state-of-the-art SLAM filters for occupancy grid maps~\citep{Grisetti2007}.
Each particle represents a map and a robot trajectory hypothesis. 
\citet{Stachniss2005} studied myopic exploration in RBPF SLAM by discretizing the action space to a set of possible waypoints and then evaluating the approximate expected information gain when traveling to the waypoints by sampling. 
This work was later expanded upon~\citep{Blanco2008, Carlone2010} by considering alternative measures of uncertainty of the SLAM solution. 
These approaches consider both exploration of new areas and maintaining the consistency of the particle filter approximation.

\paragraph{Contribution} 
We present a new approximation for mutual information that is useful in mobile robotics exploration problems.
The approximation can be easily integrated with forward simulation planning methods, and avoids computing full SLAM filter updates during the planning phase.
In contrast to e.g.~\cite{Martinez-Cantin2009,Indelman2014a,Indelman2015}, we do not assume a Gaussian belief state.
We propose and empirically evaluate in simulated and real-world domains a exploration method combining strengths of decision-theoretic POMDP based exploration and classical frontier based exploration.
In all cases, we concentrate on non-myopic planning instead of the greedy one-step maximization of utility.
\section{Exploration as a POMDP}
\label{sec:pomdp}
Consider a robot exploring a partially observable environment.
Let $s \in \St$ denote the hidden state of the system, comprising the state of the robot and the state of the environment.
At each decision epoch in the set $\Ti=\{0, 1, \ldots, H-1\}$, $H\in\N \cup \{\infty\}$, where $H$ is the horizon of the problem, the robot selects a control action $u\in \U$.
Consequently, the state at the next decision epoch is determined by a transition according to a Markovian state transition model $\T(s',s,u)$, giving the conditional probability of transitioning to $s'\in\St$ from $s\in\St$ when action $u\in\U$ is executed.
After the state transition, the robot obtains information regarding the state in the form of an observation.
The observation is modelled by a probabilistic model $\Ob(z',s',u)$ giving the conditional probability of perceiving observation $z'\in \Z$ in state $s'\in\St$ after action $u\in\U$ was executed.
As the robot's knowledge of the true system state is incomplete, it is represented by a belief state $b\in \B$, a probability density function (pdf) over the state space $\St$. 
The set $\B$, containing all pdfs over $\St$, is called the belief space.

As the robot executes control actions and perceives observations, its belief state is tracked by Bayesian filtering.
Given a belief state $b\in\B$ and an action $u\in\U$, the predicted belief state $b^u$ is computed by 
\begin{equation}
\label{eq:prediction}
b^u(s') = \int\limits_{s\in \St} \T(s',s,u) b(s) \mathrm{d}s.
\end{equation}
Given an observation $z'$, the posterior belief $b'$ is obtained in the update step by applying Bayes' rule:
\begin{equation}
\label{eq:update}
b'(s') := \tau(b,u,z')(s') = \frac{\Ob(z', s', u)b^u(s')}{\eta(z'\mid b, u)},
\end{equation}
where the function $\tau:\B\times \U \times \Z \to \B$ is referred to as the belief update function.
The normalization term in the denominator is the prior probability of observing $z'$: $\eta(z'\mid b, u) = \int\limits_{s'\in \St} \Ob(z', s', u) b^u(s')\mathrm{d}s'$.

For each action, the robot receives an instantaneous reward given by the reward function $\rho:\B\times\U\to\R$.
Throughout the remainder of the paper, we will assume that the reward function is bounded, i.e., there exist $\rho_{\text{min}}, \rho_{\text{max}}\in\R$ such that $\forall b,u: \rho_{\text{min}} \leq \rho(b,u) \leq \rho_{\text{max}}$.
The reward at decision epoch $t\in\Ti$ is discounted by a factor $\gamma^{t}$, typically $\gamma\in[0,1)$ \citep{Puterman1994}.
The robot is said to act optimally when the expected sum of discounted instantaneous rewards over all decision epochs in $\Ti$ is maximized.
The elements thus defined constitute a discounted reward partially observable Markov decision process (POMDP).

\subsection{Reward functions for exploration}
\label{subsec:rewards}
Suitable reward functions for exploration encourage reducing uncertainty in the belief states.
Such reward functions should yield a high reward for beliefs that have the majority of their probability mass concentrated on a small subset of states.
Vice versa, beliefs that correspond to uncertain state information should yield a low reward.
Uncertainty functions and information gain provide useful characterizations that can be applied to define such reward functions.
\begin{definition}[Uncertainty function and information gain~\citep{DeGroot2004}]
\label{def:unc}
An uncertainty function is a concave function $f:\B\to\R^+$.
The information gain $I_f$ related to an uncertainty function $f$ is the expected reduction in the uncertainty function:
\begin{equation}
I_f(b,u) = f(b^u) - \E[f(\tau(b,u,z'))],
\end{equation}
taking the expectation under $\eta(z'\mid b,u)$.
\end{definition}
For example, $I_f(b,u)$ or $-f(b^u)$ may then be applied as a reward function.
A common choice for the uncertainty function $f$ is the Shannon entropy, for which the corresponding information gain is the mutual information~\citep{Cover2006}.
Mutual information has been applied as a reward function in several works on robotic information gathering, see e.g.~\citep{Stachniss2005,Hitz2014,Charrow2014,Atanasov2014,Lauri2015}.

\subsection{Optimal policies}
\label{subsec:opt_policy}
An optimal solution to a finite-horizon POMDP is a sequence of policies $(\pi_1^*, \pi_2^*, \ldots, \pi_{H}^*)$, where $\pi_t^*:\B\to\U$ maps belief states to control actions, such that acting according to $\pi_t^*$ at every decision epoch $t$ maximizes the expected sum of discounted rewards over the remaining decision epochs\footnote{Note that by this formulation, at decision epoch 0, $\pi_H^*$ is followed, at decision epoch $1$, $\pi_{H-1}^*$, and so on.}.
An optimal solution is characterized by the value iteration procedure for $t=1, 2, \ldots, H$ indicating the number of decision epochs remaining, starting with $Q_{1}(b,u) \equiv \rho(b,u)$:
\begin{align}
\label{eq:q_vi}
Q_t(b,u) &= \rho(b,u) + \gamma \int\limits_{z'\in\Z} \eta(z'\mid b, u) V_{t-1}^*(\tau(b,u,z')) \mathrm{d}z',\\
\label{eq:value_vi}
V_t^*(b) &= \sup\limits_{u\in\U} Q_t(b,u),\\
\label{eq:policy_vi}
\pi_{t}^*(b) &= \argsup\limits_{u\in\U} Q_t(b,u).
\end{align}
The value function $V_t^*(b)$ gives the expected sum of discounted rewards attainable by acting optimally for $t$ decision epochs starting at belief state $b$, and $Q_t(b,u)$ gives the expected sum of discounted rewards when action $u$ is first executed and then for the remaining $(t-1)$ decision epochs an optimal policy is followed.
For an infinite horizon problem, value iteration converges to a unique fixed point $V^*$, and the corresponding optimal policy is stationary~\citep{Bertsekas1995}.

Traditionally, reward functions in POMDPs are state and action dependent, and thus linear in the belief state~\citep{Kaelbling1998}.
For $\rho(b,u)$ linear in the belief state, the optimal finite-horizon value function can be represented by a piece-wise linear and convex (PWLC) function~\citep{Smallwood1973}.
Several of the most efficient currently known approximate algorithms for POMDPs leverage the PWLC representation to find solutions~\citep{Hauskrecht2000, Spaan2005, Pineau2006, Kurniawati2008}.

However, a reward function defined as an uncertainty function or information gain (Definition~\ref{def:unc}) is nonlinear in the belief.
\citet{Araya2010} approximated a reward function convex in the belief state by a piece-wise linear function and were able to apply standard POMDP algorithms to approximate the optimal policy.
Online POMDP algorithms \citep{Ross2008} that repeatedly find an optimal action only for the current belief state during task execution and do not rely on an explicit closed form representation of the value function are also applicable to POMDPs with nonlinear rewards.
Several online algorithms are based on a finite depth tree search over reachable belief states.
The number of reachable belief states after $k$ decisions is $(|\U||\Z|)^k$, which quickly makes the search intractable for problems with large action and observation spaces.
If either $\U$ or $\Z$ are uncountable, the basic tree search method is not applicable.

In mixed-observability problems, a part of the state is fully observable and evolves deterministically.
In such mixed-observability domains with a mutual information reward function, \citet{Atanasov2014} showed that if the observation model is linear-Gaussian w.r.t.\ the target state, the optimal policy for maximizing mutual information is open-loop, and if possible actions are constrained by the fully observable part of the state, \citet{Lauri2015} showed that under certain conditions the problem can be relaxed into a multi-armed bandit problem.

\subsection{Open-loop solution to a POMDP}
\label{subsec:openloop}
Instead of resorting to the aforementioned approaches for POMDPs with nonlinear rewards, we consider an open loop approximation that ignores the effect of information provided by future observations, instead choosing actions only on the basis of currently available information.
By this approach, we are able to handle continuous action and observation spaces, although at a loss in optimality of the solution.

Denote the current belief state by $b_0$.
Given an action sequence $u_{0:H-1}$, the predicted pdf of the measurements $z_{1:H}$ given the information in the current belief state is
\begin{equation}
\begin{split}
p(z_{1:H}\mid b_0, u_{0:{H-1}}) = \int\limits_{\St^{|H+1|}} b_0(s_0)\cdot &\\
\prod\limits_{k=0}^{H-1}\Ob(z_{k+1},s_{k+1},u_k)\T(s_{k+1},s_k,u_k)\mathrm{d}s_{0:H}.
\end{split}
\end{equation}
The open loop objective function over the corresponding decision epochs is
\begin{equation}
\label{eq:objectivefcn}
\begin{split}
J_H&(b_0,u_{0:H-1}) = \\
&\rho(b_0,u_0) + \E\left[\sum\limits_{k=1}^{H-1} \gamma^{k} \rho(\tau(b_{k-1},u_{k-1},z_{k}), u_{k})  \right],
\end{split}
\end{equation}
where the expectation is taken w.r.t.\ $p(z_{1:H}\mid b_0, u_{0:{H-1}})$, and belief states follow the recursion $b_k=\tau(b_{k-1},u_{k-1},z_{k})$.
The crucial difference compared to the value iteration and policies in Eqns.~\eqref{eq:q_vi}-\eqref{eq:policy_vi} is that the sequence of actions is \emph{fixed} beforehand.
Thus, an optimal open loop solution $u_{0:H-1}^*$ is a sequence of $H$ actions that maximizes the expected sum of discounted rewards when information provided by observations during the subsequent decision epochs is not exploited to select any of the subsequent actions:
\begin{equation}
\label{eq:optimalopenloop}
u_{0:H-1}^* = \argsup\limits_{u_{0:H-1}\in \U^{|H|}} J_H(b_0,u_{0:H-1}).
\end{equation}
We denote the optimal open loop value as $J_H^*(b_0) \equiv J_H(b_0, u_{0:H-1}^*)$.

A receding horizon control strategy is adopted as follows.
First, Eq.~\eqref{eq:optimalopenloop} is solved, $u_0^*$ is executed and an observation $z_1$ is perceived.
The belief state is revised to $\tau(b_0,u_0^*,z_1)$, and the process is repeated.
In a sense, feedback is reintroduced by computing the optimal open loop action sequence again at every decision epoch, applying updated information.
This control strategy is also referred to as open-loop feedback control (OLFC), and we denote the expected value of following this OLFC policy for $H$ decision epochs as $\hat{J}_H(b_0)$.

It is interesting to consider how much worse is the value $\hat{J}_H(b_0)$ of the OLFC policy compared to the value $V_{H}^*(b_0)$ of an optimal closed loop policy.
If the reward function is bounded, then the loss of OLFC compared to an optimal policy is bounded.
To see this, consider that the worst and best possible policies attain a reward of $\rho_{\text{min}}$ and $\rho_{\text{max}}$, respectively, on every decision epoch.
Taking into account the discounting of rewards, we have that the difference in value of \emph{any} two policies over $H$ decision epochs is at most $\sum\limits_{k=0}^{H-1} \gamma^k (\rho_{\text{max}}-\rho_{\text{min}})$.
It should be noted that in practice an optimal policy rarely obtains the maximum reward on every decision epoch, nor does an open loop policy obtain the minimum reward on every decision epoch, such that the suboptimality is often less in practice than as indicated by the bound.

The OLFC policy always performs at least as well as executing an optimal pure open-loop action sequence $u_{0:H-1}^*$, as shown by the following theorem.
\begin{theorem}[Performance of open-loop feedback control~\citep{Bertsekas2005}]
For any $b\in\B$, 
\begin{equation}
\hat{J}_H(b) \geq J_H^*(b).
\end{equation}
\end{theorem}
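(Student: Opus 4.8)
The plan is to prove the inequality by induction on the horizon $H$, using a one-step dynamic-programming decomposition of both $\hat{J}_H$ and $J_H^*$ with respect to the first action executed and the first observation perceived. This is the classical open-loop-feedback argument of Bertsekas, instantiated in the belief-MDP setting.

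First I would pin down the recursion satisfied by $\hat{J}_H$. By construction the OLFC policy, started at $b$, computes $u_{0:H-1}^* = \argsup_{u_{0:H-1}\in\U^{|H|}} J_H(b,u_{0:H-1})$, executes its first component $u_0^*$, and after perceiving $z_1 \sim \eta(\cdot\mid b,u_0^*)$ restarts the same scheme from $b_1 = \tau(b,u_0^*,z_1)$ with one fewer decision epoch remaining. Hence
\begin{equation*}
\hat{J}_H(b) = \rho(b,u_0^*) + \gamma\,\E\big[\hat{J}_{H-1}(b_1)\big],
\end{equation*}
with the expectation over $z_1$, and the base case $\hat{J}_1(b) = \sup_{u\in\U}\rho(b,u) = J_1^*(b)$ holding with equality.

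For the inductive step I would assume $\hat{J}_{H-1}\geq J_{H-1}^*$ pointwise on $\B$. Applying this to the recursion gives $\hat{J}_H(b) \geq \rho(b,u_0^*) + \gamma\,\E[J_{H-1}^*(b_1)]$. It then remains to show $J_H^*(b) = \rho(b,u_0^*) + \gamma\,\E[J_{H-1}(b_1,u_{1:H-1}^*)]$ and to bound $\E[J_{H-1}^*(b_1)] \geq \E[J_{H-1}(b_1,u_{1:H-1}^*)]$; the latter is immediate, since $u_{1:H-1}^*$ is one admissible open-loop sequence for the $(H-1)$-horizon subproblem at $b_1$ and is therefore dominated, pointwise in $z_1$, by the supremum $J_{H-1}^*(b_1)$. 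Chaining the three relations closes the induction.

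The hard part is the middle identity, $J_H^*(b) = \rho(b,u_0^*) + \gamma\,\E[J_{H-1}(b_1,u_{1:H-1}^*)]$: one must split the expectation over $z_{1:H}$ in \eqref{eq:objectivefcn} via the tower property into an outer expectation over $z_1$ and an inner conditional expectation over $z_{2:H}$, use the Markov structure of $\T$ and $\Ob$ together with the belief recursion $b_k = \tau(b_{k-1},u_{k-1},z_k)$ so that, conditionally on $z_1$, the inner term depends on the past only through $b_1$, and finally recognize after re-indexing the discount powers that this inner term is exactly the open-loop objective $J_{H-1}(b_1,u_{1:H-1}^*)$ of carrying the fixed tail sequence forward from $b_1$. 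A minor technicality to flag is existence of the maximizers: if the suprema defining $u_{0:H-1}^*$ or $J_{H-1}^*$ are not attained, the same argument goes through after replacing exact maximizers by $\varepsilon$-optimal action sequences and letting $\varepsilon\downarrow 0$.
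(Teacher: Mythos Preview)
The paper does not supply its own proof of this theorem; it merely states the result with a citation to \cite{Bertsekas2005}. Your induction argument is precisely the classical proof from that reference, and it is correct: the one-step decomposition of $J_H^*$ via the tower property, the pointwise bound $J_{H-1}^*(b_1)\geq J_{H-1}(b_1,u_{1:H-1}^*)$, and the inductive hypothesis chain exactly as you describe. The only point worth flagging is that your recursion $\hat{J}_H(b)=\rho(b,u_0^*)+\gamma\,\E[\hat{J}_{H-1}(b_1)]$ presumes the shrinking-horizon reading of OLFC (replan over the remaining $H-1$ epochs after one step), whereas the paper's phrase ``receding horizon\ldots the process is repeated'' could also be read as replanning with a fixed horizon $H$ at every step; the Bertsekas result, and hence the theorem as cited, uses the shrinking-horizon version, so your interpretation is the right one.
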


The suboptimality of following the OLFC policy may be further analyzed by noting that in certain special cases an optimal policy is effectively an open loop policy, see~\cite{Yu2005} for a discussion of such cases.
Furthermore, an approximation of an upper bound for the optimal closed loop value may be found e.g.\ applying hindsight optimization, see~\cite{Chong2000}.\section{Forward simulation for open loop control}
\label{sec:openloop}
Solving the open loop control problem of Eq.~\eqref{eq:optimalopenloop} corresponds to finding an optimal policy to a POMDP with no observations, which itself is an NP-complete problem~\citep{Papadimitriou1987}. 
Since the optimal solution is a sequence of actions $u_{0:H-1}^*$, it is possible to apply search algorithms to find the solution. 
The search space is $\U^{|H|}$, the space of action sequences with $H$ actions.
As the objective function $J_H$ may not be differentiable with respect to the action sequence, gradient methods are not generally applicable. 
We review an algorithm for both a finite (Subsection~\ref{subsec:mcts}) and an uncountable (Subsection~\ref{subsec:smc}) action space, leading to finite or uncountable search spaces, respectively. 

\subsection{Partially observable Monte Carlo planning}
\label{subsec:mcts}
The partially observable Monte Carlo planning algorithm (POMCP) of \cite{Silver2010} is a variant of Monte Carlo Tree Search (MCTS) \citep{Browne2012} to solve POMDPs with finite action and observation spaces. 
POMCP runs a sequence of forward simulations called episodes on the problem, and collects the rewards obtained in the simulation and uses them to evaluate alternative strategies by constructing a search tree over the possible policies. 
An exploration parameter $e$ determines the balance between exploitation and exploration behaviour during the search. 

\begin{figure}
\centering
\includegraphics[width=\columnwidth]{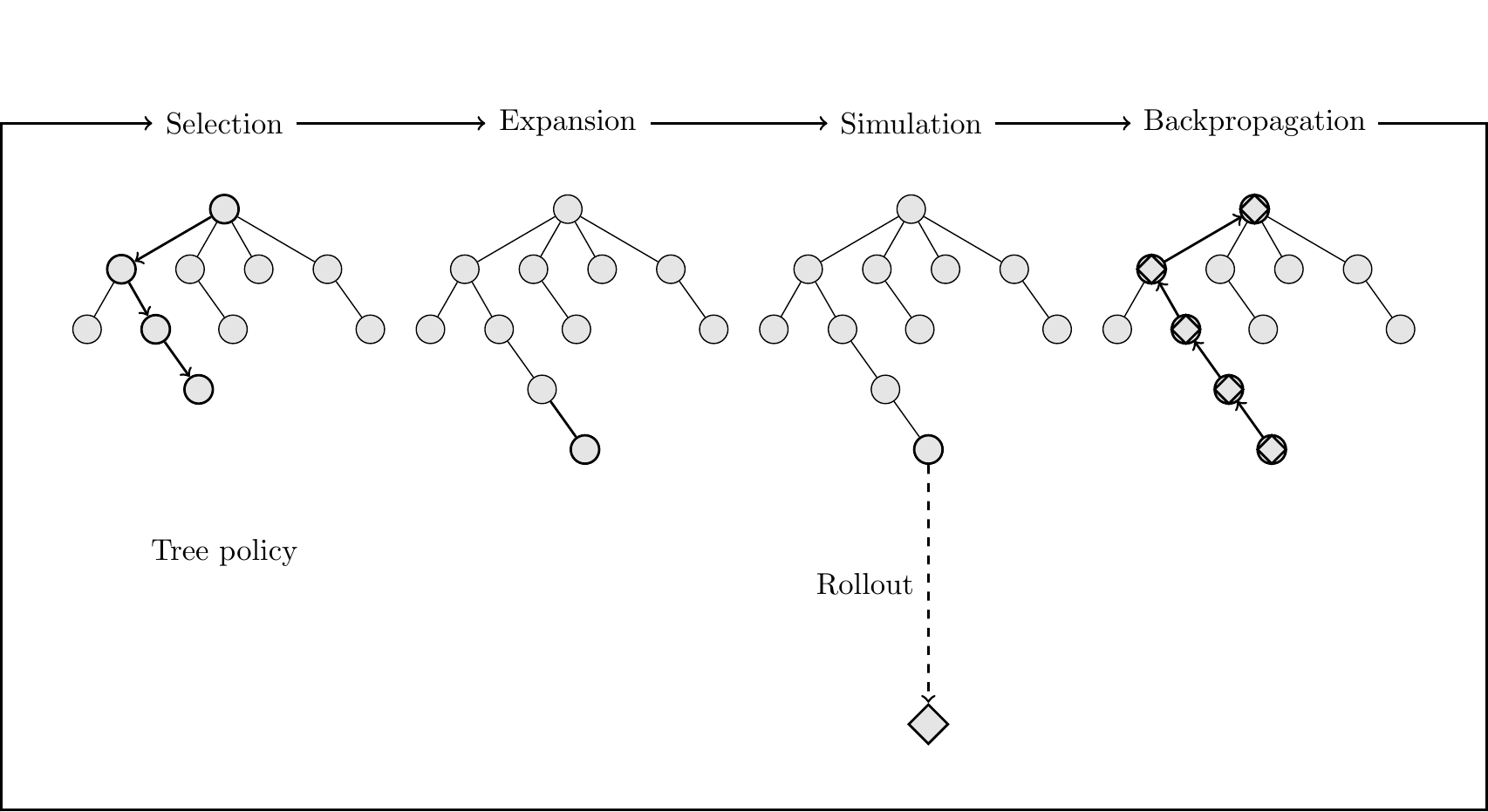}
\caption{Overview of POMCP simulation episodes. The tree policy stage consists of a selection and expansion phase, and the rollout stage consists of a simulation and backpropagation phase. (Figure adapted from~\cite{Browne2012})}
\label{fig:mcts}
\end{figure}

We have adapted POMCP for solving the open loop control problem of Eq.~\eqref{eq:optimalopenloop}.
A search tree over action sequences is iteratively constructed, starting from a tree containing just the root node corresponding to the current belief state $b$.
The overall objective is to estimate the values of possible open loop action sequences, in order to be able to select an action to execute in the current belief state that leads to the most favourable outcome over the next $H$ decisions.

Each node in the search tree corresponds to a particular action sequence, and we will refer to a node by an action sequence $u_{0:k}$.
The root node corresponds to the current belief state $b$ when no actions have yet been taken, and is referred to by an empty set.
For each node $u_{0:k}$, a value estimate and a count are maintained.
The value estimate $V(u_{0:k})$ is the mean sum of discounted rewards recorded over all simulations that continue from $u_{0:k}$ until the final decision epoch $H$.
The count $N(u_{0:k})$ is the number of times a node has been visited during the search.

Figure~\ref{fig:mcts} gives an overview of a simulation episode, and shows how the search tree is iteratively constructed.
Every episode consists of a tree policy stage and a rollout stage.
In the next two paragraphs, we give an informal overview of both phases.
After that, we describe the algorithm in more detail, filling in the missing details.

During the tree policy stage, the current belief state $b$ is propagated applying the belief update equation $\tau$ with each action $u$ determined by the search tree nodes visited and each observation $z'$ determined by sampling from the prior distribution $\eta(z'\mid b,u)$.
For example, at the root node corresponding to $b$, the value estimates and counts of the child nodes are applied to choose one child node corresponding to an action $u$.
An observation is sampled from $\eta(z' \mid b, u)$, the belief state is updated to $b = \tau(b,u,z')$, and the tree policy continues to a new node $u$.
Similarly, at any node $u_{0:k}$ a child node $u_{0:k+1} = \{u_{0:k}, u\}$ is chosen according to the values and counts of the child nodes, the belief state is updated, and the tree policy stage continues at node $u_{0:k+1}$.
An example of a trajectory of nodes traversed during the tree policy stage is indicated by the arrows in the leftmost part of Figure~\ref{fig:mcts}.

Once a leaf node that does not have any children is encountered, the tree policy cannot be continued.
The tree is expanded by adding new child nodes as appropriate, an example of expanding the tree with a single node is shown in the second from left part of  Figure~\ref{fig:mcts}, under the ``Expansion'' indicator.
The search then moves to the rollout stage.
In the rollout stage, a simulation until decision epoch $H$ is performed, updating belief and recording rewards along the way, see the third from left part of Figure~\ref{fig:mcts}.
In the final backpropagation phase of the rollout stage, shown on the rightmost part of Figure~\ref{fig:mcts}, the total reward recorded is applied to update the values of each node visited during the simulation episode.
Once a simulation episode is completed, a new one is again started at the root of the search tree, now expanded with new nodes.

\begin{algorithm}
\caption{POMCP \citep{Silver2010} for open-loop control \label{alg:mcts} }
\begin{algorithmic}[1]
\Require The POMDP model, exploration parameter $e$, optimization horizon $H$, current belief state $b_0$

\Function{POMCP}{$b_0$}
\While{stopping conditions not fulfilled} \label{lin:while}
\State \Call{Simulate}{$b_0, \emptyset, 0$} \label{lin:initsim}
\State $N(\emptyset) \gets N(\emptyset) + 1$
\EndWhile
\State \Return $\argmax_{u} V(u)$ \label{lin:return}
\EndFunction
\Statex

\Function{Simulate}{$b, u_{0:d}, d$}
\If{$d = H-1$} \Return 0 
\ElsIf{\Call{IsLeaf}{$u_{0:d}$}} \label{lin:leaf}
\State Add $\{u_{0:d},u\}$ to tree $\forall u\in \U$ \label{lin:expansion}
\State\Return \Call{rollout}{$b,d$} \label{lin:initrollout}
\EndIf
\State $u \gets \argmax\limits_{u'} V(\{u_{1:d},u'\}) + e \sqrt{{\log N(u_{0:d})}/{N(\{u_{0:d},u'\})}}$ \label{lin:uct}
\State Sample $z' \sim \eta(z'\mid b,u)$
\State $r \gets \rho(b,u) + \gamma \cdot$ \Call{Simulate}{$\tau(b,u,z'),\{u_{0:d},u\}, d + 1$} \label{lin:simulaterecursion}
\State $N(\{u_{0:d},u\}) \gets N(\{u_{0:d},u\}) + 1$ \label{lin:back1}
\State $V(\{u_{0:d},u\}) \gets V(\{u_{0:d},u\}) + \frac{r-V(\{u_{0:d},u\})}{N(\{u_{0:d},u\})}$ \label{lin:back2}
\State\Return $r$
\EndFunction
\Statex

\Function{Rollout}{$b,d$}
\If{$d = H-1$} \Return 0 
\Else
\State Sample $u \sim \pi_{\text{rollout}}(b)$ \label{lin:rolloutpolicy}
\State Sample $z' \sim \eta(z'\mid b,u)$
\State\Return $\rho(b,u) + \gamma \cdot$ \Call{Rollout}{$\tau(b,u,z'),d+1$} \label{lin:rolloutrecursion}
\EndIf
\EndFunction

\end{algorithmic}
\end{algorithm}

Algorithm~\ref{alg:mcts} presents the version of POMCP that we have adapted to solve the open loop control problem of Eq.~\eqref{eq:optimalopenloop}.
The algorithm takes as parameters the POMDP model, an exploration parameter $e$, an optimization horizon $H$, and the current belief state $b$.
The while loop (Line \ref{lin:while}) is executed until a specified stopping criterion, e.g.\ the maximum number of simulation episodes $N_s$, has been reached. 
Each iteration of the while loop corresponds to one simulation episode.

If the condition on Line~\ref{lin:leaf} is false, the current search tree is applied to guide the action selection in the tree policy stage. 
The action to execute is determined by a selection rule maximizing an upper confidence bound~\citep{Auer2002,Kocsis2006} as shown on Line~\ref{lin:uct}\footnote{If $N(\{u_{0:d},u'\})=0$ for any $u'\in \U$, the action is instead sampled uniformly at random from $\lbrace u'\in U \mid N(\{u_{0:d},u'\})=0\rbrace$.}. 
The first term in the bound is the current estimate of the expected value of the action, and the second term is an exploration bonus dependent on the exploration parameter $e$. 
The parameter $e$ is typically selected such that it is on a similar scale as typical rewards in the problem. 
The exploration bonus encourages trying actions that have rarely been tried, prompting exploratory behavior.
After sampling an observation $z'$, the tree policy stage is continued by a recursive call to the function \textsc{Simulate} (Line \ref{lin:simulaterecursion}).
Finally, the counts and values of all action sequences visited during the episode are updated with the reward information gained during the episode in the backpropagation phase (Lines \ref{lin:back1}-\ref{lin:back2}).

If an action sequence is encountered that is a leaf node of the current search tree, a rollout stage is entered (Line~\ref{lin:leaf}).
The search tree is expanded to include possible successor actions (Line~\ref{lin:expansion}).
By calling the function \textsc{Rollout}, a sample of the reward until end of the optimization horizon is obtained.
This is achieved by simulating a given rollout policy (Line~\ref{lin:rolloutpolicy}) until the end of the optimization horizon.
A typical choice of rollout policy is to sample an action uniformly at random from $\U$~\citep{Silver2010}.

Once the while loop of the main function terminates, an action recommendation $\hat{u} \gets \argmax_{u} V(u)$ is returned (Line \ref{lin:return}).
The value estimates of POMCP converge asymptotically to the optimal open loop value~\citep{Silver2010}. 

\subsection{Sequential Monte Carlo planning}
\label{subsec:smc}
\citet{Kantas2009} present a sequential Monte Carlo (SMC) method to maximizing functions of the form
\begin{equation}
\label{eq:smcmax}
J_H(\phi) =  \int\limits_{\mathcal{Y}} G(y,\phi)p(y\mid \phi) \mathrm{d}y,
\end{equation}
which is seen to be equivalent to Eq.~\eqref{eq:objectivefcn} by setting $y=z_{1:H}$, $\phi=\{b_0, u_{0:H-1}\}$, and choosing $G$ as a function that encompasses the appropriate summations\footnote{For this method, $G$ must be finite and strictly positive, which can be achieved by adding a finite positive constant without affecting the argument maximizing the objective function.}.

The key idea of the SMC optimization method of~\cite{Kantas2009} is to construct a sequence of distributions $\lambda_\nu \propto p(\phi)J_H(\phi)^\nu$ where $p(\phi)$ is an arbitrary prior that is nonzero at the maximizers of Eq.~\eqref{eq:smcmax}.
This approach is related to the Bayesian interpretation of simulated annealing and maximum likelihood estimation, see \cite{Johansen2008}. 
As $\nu \to \infty$, $\lambda_\nu(\phi)$ becomes concentrated on the set of maximizers of $J_H$. 
In practice, a set of particles is evaluated via the objective function and, based on the evaluation result, particles are either discarded or carried forward to the next evaluation round. 
Eventually the particle set converges to represent an action sequence maximizing Eq.~\eqref{eq:smcmax}. 

The SMC method is presented in Algorithm~\ref{alg:smc}.
The algorithm iterates over $l=1,\ldots,l_{max}$, where the number of iterations $l_{max}$ is chosen according to the accuracy and run-time requirements. 
The algorithm maintains a set of $M$ particles. 
At iteration $l$, each particle indexed by $i$ with an associated weight $w_l^{(i)}$ represents a sequence of actions $u_{0:H-1,l}^{(i)}$ and a set of $\nu_l$  observation sequences conditional on the action sequence, denoted $z_{1:H,j}^{(i)}$, $j=1,\ldots,\nu_l$. 
The integers $\lbrace\nu_l\rbrace_{l\geq 1}$ must form a strictly increasing sequence. 
The algorithm has been shown to converge to the optimal solution for logarithmic sequences, although in practice faster increasing linear or geometric sequences are used \citep{Johansen2008, Kantas2009}. 
The sequence $\lbrace\nu_l\rbrace_{l\geq 1}$ is analogous to the temperature cooling schedule in simulated annealing.

\begin{algorithm}
\caption{Sequential Monte Carlo optimization, adapted from \citep{Kantas2009}. \label{alg:smc} }
\begin{algorithmic}[1]
\Require The POMDP model, number of iterations $l_{max}$, increasing integer sequence $\lbrace \nu_l \rbrace_{l=1}^{l_{max}}$, sampling kernels $q_l$, optimization horizon $H$, resampling threshold $M_t$.
\Function{SMC}{$b_0$}
\For{$l = 1, \ldots, l_{max}$}
\For{$i=1,\ldots,M$}
\State $u_{0:H-1,l}^{(i)} \sim q_l(\cdot \mid u_{0:H-1,l-1})$ \label{lin:actionsample}
\For{$j=1, \ldots, \nu_l$} \label{lin:replicastart}
\State $z_{1:H,j}^{(i)}\sim p(z_{1:H}\mid b_0, u_{0:H-1,l}^{(i)})$
\EndFor \label{lin:replicastop}
\State \begin{varwidth}[t]{\linewidth}$w_{l}^{(i)} \gets$\par
        \hskip\algorithmicindent $w_{l-1}^{(i)}\prod\limits_{j=1}^{\nu_l} \left(\rho(b_0,u_{0,l}^{(i)}) \right.$,\par
        \hskip\algorithmicindent $\left. + \sum\limits_{k=1}^{H-1} \gamma^{k} \rho(\tau(b_{k-1}^{(i)},u_{k-1,l}^{(i)},z_{k,j}^{(i)}), u_{k,l}^{(i)})  \right)$ \label{lin:evaluation}
        \end{varwidth}
\EndFor
\For{$i=1,\ldots,M$}
\State $w_{l}^{(i)} \gets {w_l^{(i)}}/{\sum\limits_{j=1}^{M}w_l^{(j)}}$
\EndFor
\If{$1/ \sum\limits_{i=1}^{M} (w_l^{(i)})^2 < M_t$}
\State Resample; $w_{l}^{(i)} \gets 1/M \quad \forall i$ \label{lin:resample}
\EndIf
\EndFor
\EndFunction
\end{algorithmic}
\end{algorithm}

A control sequence is sampled for each particle from a kernel $q_l$ (Line \ref{lin:actionsample}).
For $l=1$, the initial control sequences are sampled from a prior distribution that is nonzero at the maximizers of \eqref{eq:smcmax}.
In general, the choice of $q_l$ depends on the problem specifics. 
The selection of the kernel determines how new control sequence samples depend on the earlier ones, and it has a central role in determining how efficient the algorithm is. 
For further discussion on kernel selection, we direct the reader to \cite{DelMoral2006}.

After control sequences have been sampled, they are applied to generate $\nu_l$ replicas of possible state-observation sequences when executing the control sequence (Lines \ref{lin:replicastart}-\ref{lin:replicastop}). 
The weights of the particles are updated by approximating the objective function applying the state and observation samples contained in each particle (Line \ref{lin:evaluation}).
In this step, the belief states for given $j,l,i$ follow the recursion $b_k^{(i)} = \tau(b_{k-1}, u_{k-1,l}^{(i)},z_{k,j}^{(i)})$.

\begin{figure}[t!p]
\centering
	\begin{subfigure}[t]{0.33\columnwidth}
        \includegraphics[width=\textwidth]{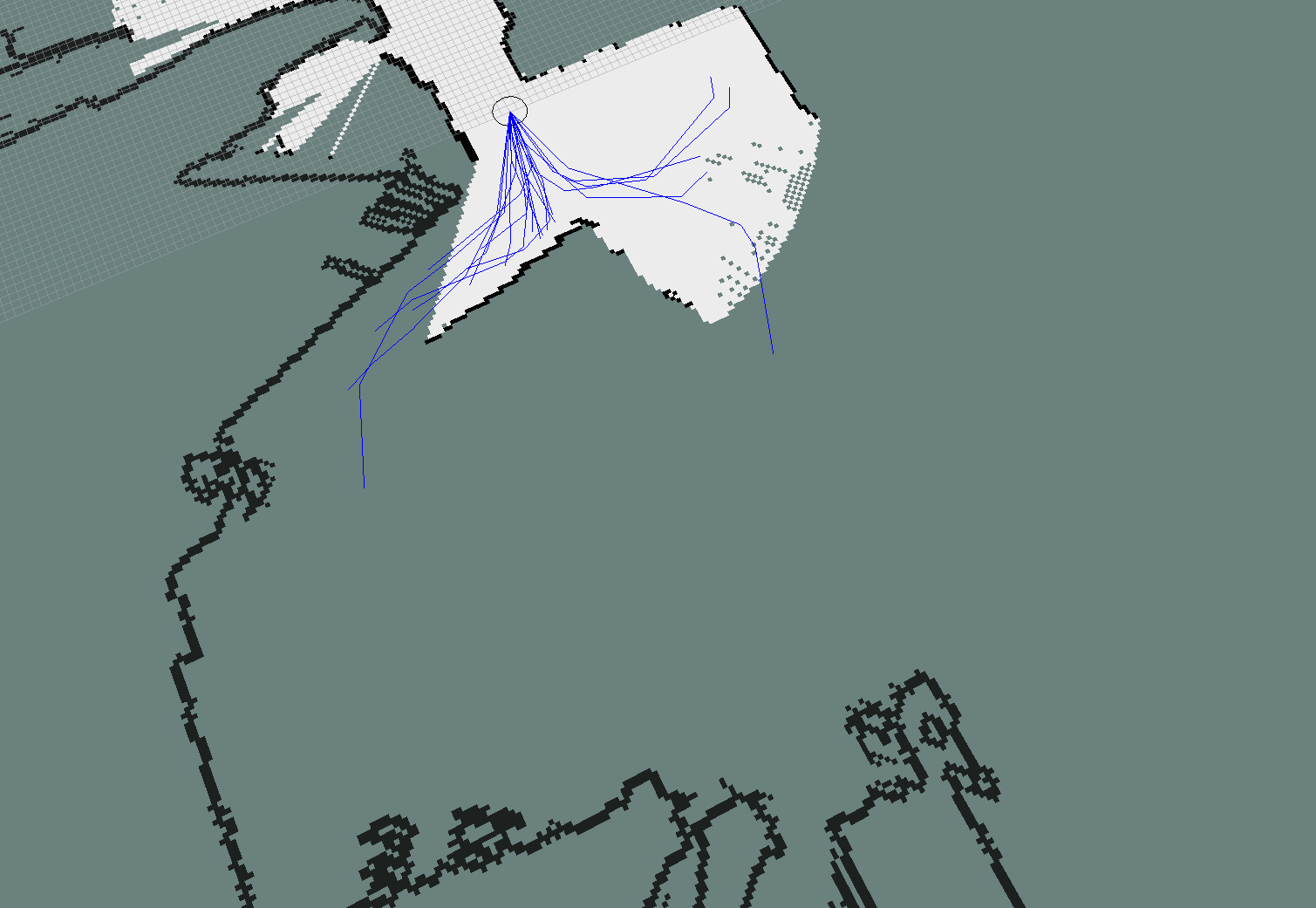}
        \caption{}
        \label{fig:smc_convergence_a}
    \end{subfigure}
    \begin{subfigure}[t]{0.33\columnwidth}
        \includegraphics[width=\textwidth]{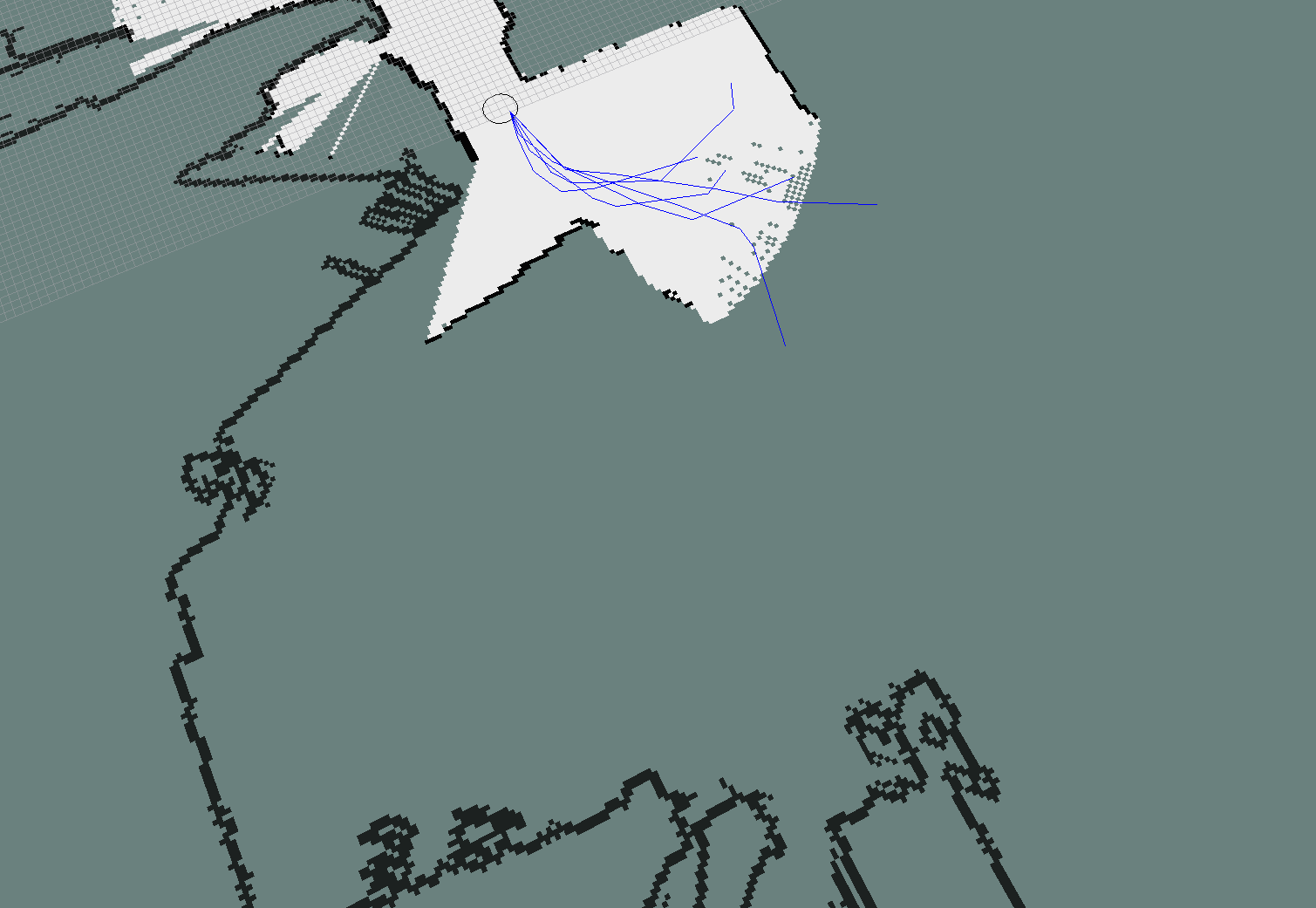}
        \caption{}
        \label{fig:smc_convergence_b}
    \end{subfigure}
    \begin{subfigure}[t]{0.33\columnwidth}
        \includegraphics[width=\textwidth]{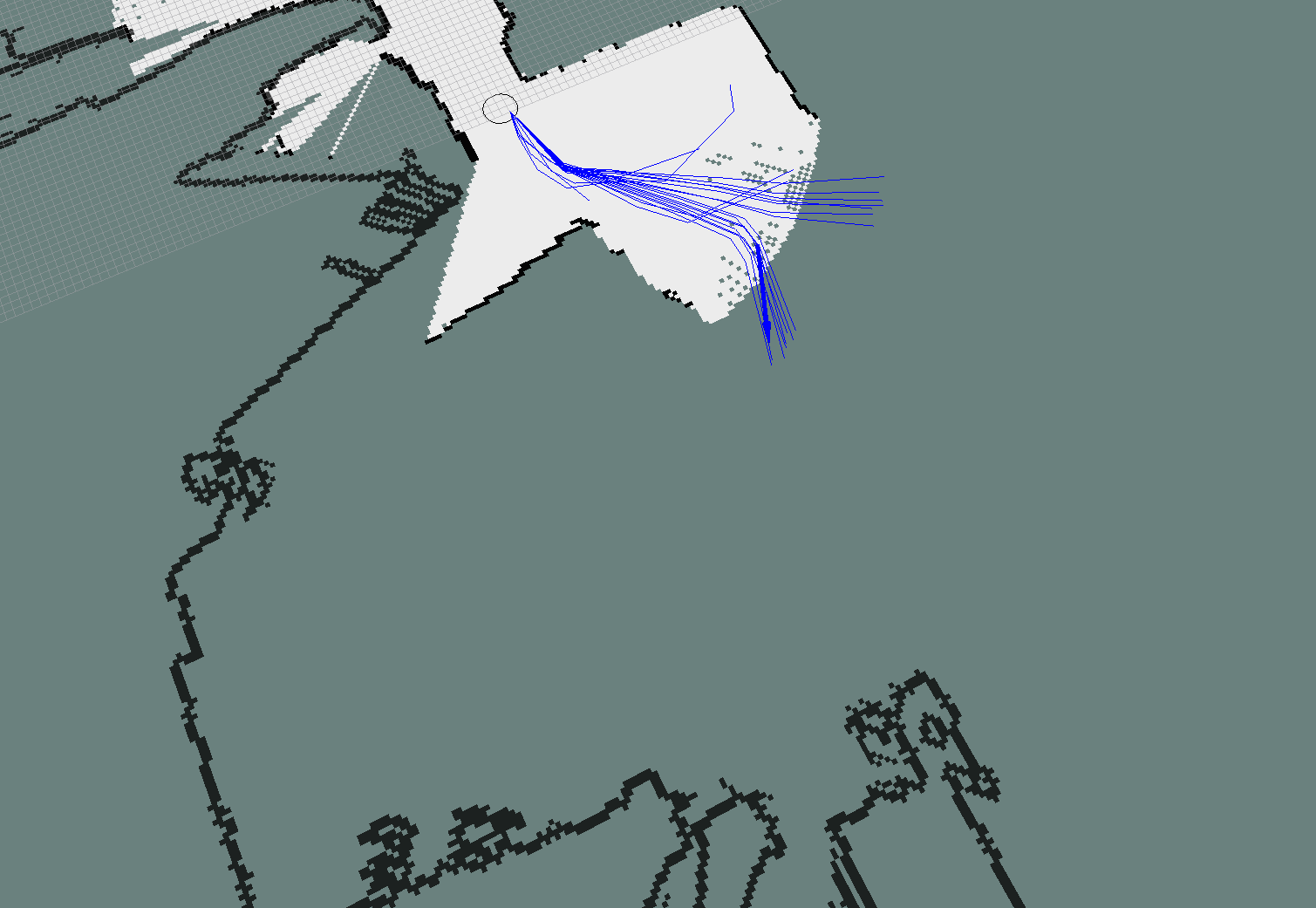}
        \caption{}
        \label{fig:smc_convergence_c}
    \end{subfigure}
\caption{Convergence of the SMC algorithm in a robotic exploration task. Subfigures \ref{fig:smc_convergence_a} through \ref{fig:smc_convergence_c} indicate increasing iterations $l$ of the algorithm. The robot location and outline are indicated by a thin black circle. Local trajectories shown by the blue lines are superimposed on a greyscale image indicating current map information. The map shows occupied cells in black, free cells in white, and unknown cells in grey.  The local trajectories are sequentially evaluated based on their informativeness, and eventually converge towards the most informative local trajectory.}
\label{fig:smc_convergence}
\end{figure}

At the end of each iteration, resampling is performed if the effective sample size \citep{Liu1998} falls below the threshold value $M_t$ (Line \ref{lin:resample}).
After step $l=l_{max}$, the maximizer estimate is extracted as $u_{1:H-1,l_{max}}^{(i_m)}$, where $i_m = \argmax\limits_{i}w_l^{(i)}$. 
Algorithm~\ref{alg:smc} is easily parallelized, as each particle can be processed independently.

Figure~\ref{fig:smc_convergence} shows an example of applying Algorithm~\ref{alg:smc} to a robotic exploration problem.
The figure shows for three iterations of the algorithms how possible trajectories for the robot corresponding to the particles are evaluated, weighted and resampled, converging towards the most informative local trajectory.\section{Mutual information in mobile robotics}
\label{sec:miapprox}
In this section, we present a new approximation for mutual information (MI) that is especially useful in mobile robotics domains.
As the approximation is sample-based, it is straightforward to apply it together with forward simulation based planning.
The approximation is derived in Subsection~\ref{subsec:mi}, and in Subsection~\ref{subsec:sampling} we present a method for efficiently drawing observation samples in the case occupancy grids are used as a map representation.

\subsection{Approximation of mutual information}
\label{subsec:mi}
The state $s\in\St$ in a robotic exploration problem is decomposed into the robot's internal state $x\in \X$ and the environment state or map $m\in \M$, i.e.\ $\St=\X\times \M$. 
The internal state represents the pose of the robot, and the environment state represents all relevant features of the world. 
We make the following assumption asserting that the robot is unable to affect the environment state through its actions. 
\begin{assumption}
\label{asm:independence}
The robot's internal state and environment state evolve independently, i.e.\
\begin{equation}
\label{eq:indassumption}
\T(s', s, u) = \T_x(x', x, u)\T_m(m',m),
\end{equation}
where $\T_x:\X\times\X\times\U\to\R^+$ and $\T_m:\M\times\M\to\R^+$ are the robot and environment state transition models, respectively.
\end{assumption}
We remark that this assumption does not imply that the internal and environment state are independent.

Suppose information gain (Definition~\ref{def:unc}) with Shannon entropy as the uncertainty function is applied as the reward in an exploration POMDP.
This leads to $I_f(b,u)$ equal to the mutual information, which can be approximated as follows.
\begin{theorem}
\label{thm:mi}
Denote by $b\in\B$ the current belief state, by $u\in\U$ the current action, and let $X$, $M$, and $Z$ denote the random variables depicting the robot state, environment state, and observation at the next decision epoch.
Consider the mutual information of the state $(X,M)$ and observation $Z$, defined
\begin{equation}
\begin{split}
I&(X,M;Z\mid b, u) =\\
 \int\limits_{\M}\int\limits_{\X} p(x',m'\mid b,u)\log&\frac{p(x',m'\mid b,u)}{p(x'\mid b,u)p(m'\mid b,u)}\mathrm{d}x'\mathrm{d}m'.
\end{split}
\end{equation}
If Assumption~\ref{asm:independence} holds and if $\{x'^{(i)}, z'^{(i)}\}_{i=1}^N \sim p(x',z'\mid b, u)$, where $N$ is the number of samples, then the approximation
\begin{equation}
\label{eq:mi_approximation}
\begin{split}
I_N &= \frac{1}{N}\sum\limits_{i=1}^N \Bigg[\log \frac{p(x'^{(i)}\mid b, u, z')}{p(x'^{(i)}\mid b,u)} \\
 + \int\limits_{\M}p(m'\mid b,u,z'^{(i)}, x'^{(i)})&\log \frac{p(m'\mid b, u, z'^{(i)}, x'^{(i)})}{p(m'\mid b, u)} \mathrm{d}m' \Bigg]
\end{split}
\end{equation}
converges almost surely to $I(X,M;Z\mid b, u)$ as $N\to\infty$.
\end{theorem}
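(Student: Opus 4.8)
\quad The plan is to read the right-hand side of \eqref{eq:mi_approximation} as an empirical mean and close the argument with the strong law of large numbers. Write $g(x',z')$ for the bracketed expression inside the sum, so that $I_N=\tfrac1N\sum_{i=1}^N g(x'^{(i)},z'^{(i)})$ with $(x'^{(i)},z'^{(i)})$ drawn i.i.d.\ from $p(x',z'\mid b,u)$. The theorem then reduces to two facts: (a) the identity $\E\bigl[g(X',Z')\bigr]=I(X,M;Z\mid b,u)$ for $(X',Z')\sim p(x',z'\mid b,u)$, and (b) the integrability bound $\E\bigl[\,|g(X',Z')|\,\bigr]<\infty$. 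Given (a) and (b), Kolmogorov's strong law of large numbers gives $I_N\to\E[g]=I(X,M;Z\mid b,u)$ almost surely, which is the claim.

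For step (a) I would decompose the joint mutual information with the chain rule, $I(X,M;Z\mid b,u)=I(X;Z\mid b,u)+I(M;Z\mid X,b,u)$, and match the two terms of $g$ against the two summands. The first term of $g$, $\log\tfrac{p(x'\mid b,u,z')}{p(x'\mid b,u)}$, is the pointwise posterior-to-prior log-ratio --- equivalently $\log\tfrac{p(z'\mid x',b,u)}{\eta(z'\mid b,u)}$ --- whose expectation under $p(x',z'\mid b,u)$ is by definition $I(X;Z\mid b,u)$, so it is an unbiased one-sample estimator of that term. The second term of $g$ is the Kullback--Leibler divergence $D_{\mathrm{KL}}\bigl(p(m'\mid b,u,z',x')\,\|\,p(m'\mid b,u)\bigr)$ evaluated at the sampled pair; I would show that its expectation over $p(x',z'\mid b,u)$ equals $I(M;Z\mid X,b,u)$. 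This is the step in which Assumption~\ref{asm:independence} is essential: one writes the joint predictive density $p(x',m',z'\mid b,u)=\Ob(z',x',m',u)\,b^u(x',m')$, expands $b^u$ through the prediction step \eqref{eq:prediction} using the factored kernel \eqref{eq:indassumption}, and checks that this is what makes $p(m'\mid b,u)$ the correct reference measure for the divergence and removes any residual coupling between the predicted robot state $X'$ and map state $M'$. (As promised by Definition~\ref{def:unc}, $I(X,M;Z\mid b,u)$ is exactly the information gain $I_f$ with $f$ the Shannon entropy, so this also exhibits \eqref{eq:mi_approximation} as a consistent estimator of the exploration reward.)

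For step (b), the standing assumption that $\rho$ is bounded forces the relevant Shannon entropies, and hence $I(X,M;Z\mid b,u)$ and --- since $0\le I(X;Z\mid b,u)\le I(X,M;Z\mid b,u)$ --- also $I(X;Z\mid b,u)$, to be finite; combined with the nonnegativity of the inner divergence this controls the positive and negative parts of $g$ and yields the integrability that the strong law requires. The step I expect to be the main obstacle is the identity (a), and specifically the verification that Assumption~\ref{asm:independence} is exactly strong enough for the expectation of the inner divergence term to collapse to $I(M;Z\mid X,b,u)$ without leaving behind a term coupling $X'$ and $M'$ (i.e.\ an $I(X';M'\mid b,u)$ contribution); the chain rule, the log-ratio representation of mutual information, and the concluding appeal to the strong law are all routine once that bookkeeping is pinned down.
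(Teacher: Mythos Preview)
Your proposal is correct and follows essentially the same route as the paper: decompose via the chain rule $I(X,M;Z\mid b,u)=I(X;Z\mid b,u)+I(M;Z\mid X,b,u)$, use Assumption~\ref{asm:independence} to rewrite $p(m'\mid x',b,u)=p(m'\mid b,u)$ so that both summands become expectations under the common density $p(x',z'\mid b,u)$, and conclude by the strong law (the paper cites the Monte Carlo integration lemma of Robert and Casella, which is the same thing). The only difference is that you make the integrability check explicit as your step~(b), whereas the paper simply invokes the lemma without discussing it; your argument there is slightly loose---finiteness of $\E[g]$ does not by itself give $\E[|g|]<\infty$ for the signed first summand---but the paper does not address this point at all.
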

\begin{proof}
By the chain rule for MI \citep{Cover2006},
\begin{equation}
\label{eq:michain}
I(X,M; Z\mid b, u) = I(X; Z \mid b, u) + I(M; Z \mid X, b, u).
\end{equation}
The first term of Eq.~\eqref{eq:michain} is the MI of $X$ and $Z$:
\begin{equation}
\label{eq:mi1}
I(X;Z\mid b, u) = \int\limits_{\Z}\int\limits_{\X} p(x',z'\mid b,u) \log \frac{p(x'\mid b,u,z')}{p(x'\mid b,u)} \mathrm{d}x'\mathrm{d}z'.
\end{equation}
The second term of Eq.~\eqref{eq:michain} is the conditional MI of $M$ and $Z$, given $X$.
By the definition of conditional MI \citep{Cover2006},
\begin{equation}
I(M;Z\mid X, b, u) = \E_{Z,X,M}\left[ \log \frac{p(m',z'\mid x',b,u)}{p(m'\mid x',b,u)p(z'\mid x',b,u)}\right],
\end{equation}
where the expectation is taken w.r.t.\ $p(z',x',m'\mid b,u)$.
By Assumption~\ref{asm:independence}, $p(m'\mid x',b,u) = p(m'\mid b,u)$, and by the law of conditional probability we have
\begin{equation}
\label{eq:mi2}
\begin{split}
&I(M;Z\mid X,b,u) = \int\limits_{\Z}\int\limits_{\X}p(x',z'\mid b,u)\\
 &\left[\int\limits_{\M}p(m'\mid x',z',b,u)\log\frac{p(m'\mid x',z',b,u)}{p(m'\mid b,u)}\mathrm{d}m' \right]\mathrm{d}x'\mathrm{d}z'
\end{split}
\end{equation}
We have thus shown that both terms on the right hand side of Eq.~\eqref{eq:michain}, namely, Eqns.~\eqref{eq:mi1}~and~\eqref{eq:mi2}, are expectations under the same joint pdf $p(x',z'\mid b,u)$.
Drawing $N$ samples from this pdf and defining $I_N$ as above, by the lemma of Monte Carlo integration \citep[Ch.~3.2]{Robert1999} $I_N$ converges to $I(X,M;Z\mid b, u)$ almost surely.
\end{proof}

To apply the approximation presented in Theorem~\ref{thm:mi}, we draw samples from $p(x',z'\mid b,u)$ by the following three-step method for $i=1,2,\ldots,N$:
\begin{enumerate}
\item Sample from the current belief state $s^{(i)} = (x^{(i)}, m^{(i)}) \sim b$,
\item Propagate the sample through the factored state transition model of Assumption~\ref{asm:independence}, yielding new samples $x'^{(i)}\sim \T_x(x',x^{(i)},u)$ and $m'^{(i)} \sim \T_m(m',m^{(i)})$.
\item The sample from the previous step is distributed according to the predictive pdf $b^u = p(x',m'\mid b,u)$. Recalling $s' = (x', m')$, we have by marginalizing over $\M$
\begin{equation}
p(x',z'\mid b,u) = \int\limits_{\M}\Ob(z',s',u)b^u(s')\mathrm{d}m',
\end{equation}
and we may thus sample $z'^{(i)} \sim \Ob(z',s'^{(i)},u)$ to obtain the desired samples.
\end{enumerate}

Theorem~\ref{thm:mi} is applicable with three restrictions.
First, we must be able to draw samples from the state transition and observation models either directly or e.g.\ applying importance sampling. 
Secondly, evaluating the first sum term in Eq.~\eqref{eq:mi_approximation} requires us to be able to evaluate the predictive probability and the posterior probability of a robot state $x'^{(i)}$ given an action $u$ and a measurement $z'^{(i)}$. 
The predictive pdf can be evaluated applying the state transition model. 
It is not generally easy to find the posterior $p(x'^{(i)} \mid b,u,z'^{(i)})$, but in some cases it can be approximated by a unimodal distribution.
As argued by~\cite{Grisetti2007}, such a case arises e.g.\ for robots equipped with accurate range sensors such as laser range finders (LRFs). 
LRF data can be leveraged via scan matching to obtain localization estimates that are significantly more precise than estimates based only on robot's state transition models. 
Consequently, the measurement likelihood is strongly peaked and the posterior pdf of the robot state given the observation may be approximated by a unimodal distribution, e.g.\ a Gaussian with a small covariance.
Third and finally, evaluating the integral term in the sum of Eq.~\eqref{eq:mi_approximation} requires us to be able to compute posterior map pdfs given a pose and an observation.
This corresponds to the problem of mapping with known poses~\citep{Moravec1988}. 
We must also be able to evaluate the integral expression over the map state. 
The difficulty of this depends on the map representation, but is simple e.g.\ for occupancy grid maps due to the independence assumption of the grid cells: the integral term reduces to a sum over cells.

The overall effect of the approximation is that we avoid computing the full state posterior pdf and solving the implied SLAM problem when evaluating the expected information gain of a control action.
Note that this does not preclude executing a SLAM algorithm for the real process independently of the task of finding an optimal open loop action sequence. 
The current belief state $b=p(x,m)$ provided by the SLAM algorithm is applied as initial information for finding the optimal open loop solution. 
Sometimes the real process SLAM particle filter may be susceptible to losing consistency~\citep{Blanco2008, Carlone2010}, and applying the proposed approximation may lead to a failure state as this possibility is not considered while planning trajectories. 
In such cases, either the SLAM algorithm must be improved or an alternative approximation for MI applied.

\subsection{Observation sampling in occupancy grid maps}
\label{subsec:sampling}
To apply the forward simulation based planning algorithms presented in Section~\ref{sec:openloop} together with the approximation of MI presented above, efficient methods for drawing observation samples given a sequence of control actions are required.
In this section, we introduce such a sampling method for occupancy grid maps \citep{Moravec1988}, a map representation widely applied in mobile robotics.

An occupancy grid map defines a partition of a space into equally-sized cells $c$.
We assume that the map $\M$ is composed of a finite number of such cells.
Each cell is in one of two hidden states, either free (0) or occupied (1).
As cell occupancy is often not known exactly, an occupancy probability $P(c=1):=p_c \in [0,1]$ models the information regarding the state of the cell.

The occupancy information is revised through sensor observations.
Depending on the robot's sensors and its pose $x$, it is possible that only a subset $\tilde{\M}(x)\subset \M$ of the map is sensed by a robot at $x$.
Figure~\ref{fig:sensing} shows an example of such a scenario.
We note that even several measurements over a trajectory traversed by the robot may only provide information regarding a subset of the whole map area.
As the cells of an occupancy grid map are independent, this suggests that an efficient method for sampling observations may be constructed by considering only the subset of potentially sensed map cells.

\begin{figure}[tp!]
\centering
\includegraphics[width=\columnwidth]{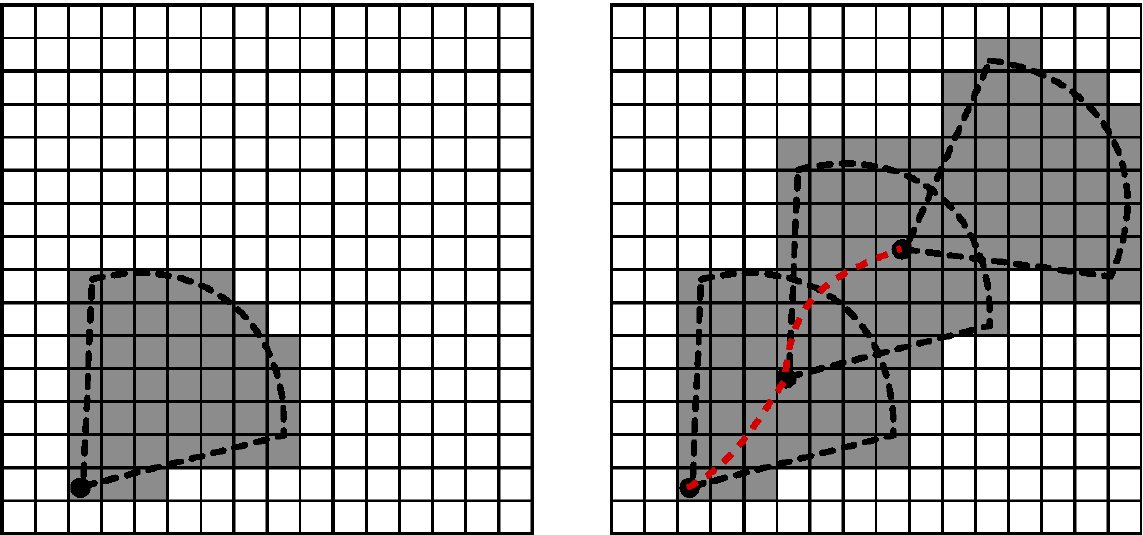}
\caption{Occupancy grids and observed areas. The robot's pose is marked by a black circle, and the dashed sector depicts sensor range. The cells with a grey background denote the subset of the occupancy grid that is potentially sensed either at a single pose (left side) or a trajectory depicted by dashed lines between robot poses (right side).}
\label{fig:sensing}
\end{figure}

We represent observations $z$ as collections of pairs of the form $(c,j)$, where $c$ denotes the cell and $j\in\{0,1\}$ is the \emph{observed occupancy} for the cell: free (0) or occupied (1).
Given an optimization horizon $H$ and the control actions $u_{0:H-1}$, a sequence $z_{1:H}^{(i)}$ of observation samples is constructed.
In the case of a laser range finder, raytracing may be applied to draw observation samples.
Raytracing simulates a laser beam originating from the robot's pose $x$, tracking which cells the beam travels through.
When the ray intersects a cell $c$, the occupancy information $p_c$ is applied to determine whether the cell is free and the beam passes through, or the cell is occupied and the beam hits an obstacle in the cell.
In the former case, $(c,0)$ is inserted to $z_t^{(i)}$, and in the latter case $(c,1)$ is inserted to $z_t^{(i)}$ and the raytracing is terminated.

As the same cell may be observed at multiple decision epochs and the observations should be consistent, i.e. they are generated from the same true underlying map, a persistent map sample $m^p$ keeping track of the occupancy states of cells intersected by the simulated laser beams is maintained.
The persistent map sample is also a collection of pairs $(c,o)$ of a cell $c$ and its \emph{sampled occupancy} $o\in\{0,1\}$ in the map.
Whenever a cell $c$ is encountered during ray tracing, it is first checked whether $m^p$ contains it already.

An observation sampling algorithm implementing these features is presented in Algorithm~\ref{alg:sampling}.
A sample of the current pose $x_0^{(i)}$ is drawn from $p(x_0)$ obtained by marginalizing from the current belief state $b_0=p(x_0,m_0)$ (Line \ref{lin:initsample}).
For each decision epoch $0\leq t\leq H-1$, a pose sample $x_{t+1}^{(i)}$ is drawn (Line~\ref{lin:getactionandsample}) from the robot's dynamic model.
Raytracing with the persistent map sample $m^p$ is applied to sample $z_{t+1}^{(i)}$ (Line~\ref{lin:sampleobs}) from the observation model.
In this case, the sampling is implemented in the function \textsc{Raytrace}.

For each laser beam incidence angle and each cell along the beam, the occupancy state of the cell is first determined.
If the cell $c$ has already been encountered during the sampling of any $z_k^{(i)}$, $k<t$, its occupancy state is retained from the persistent map sample, and otherwise its occupancy state is sampled according to $p_c$ (Lines~\ref{lin:inmap_start}-\ref{lin:inmap_stop}).
Finally, the actual observed occupancy $j$ is sampled according to the sensor observation model (Line~\ref{lin:samplecell}), and the resulting observation is added to the observation sample.
The value $j=1$ indicates that the beam hit an obstacle, and the raytrace for this incidence angle is terminated (Line~\ref{lin:samplebreak}).

\begin{algorithm}
\caption{Observation sampling for a laser range finder (LRF). \label{alg:sampling}}
\begin{algorithmic}[1]
\Require The POMDP model, optimization horizon $H$.

\Function{sample}{$b_0$}
\State $m^p \gets \emptyset$
\State Sample $x_0^{(i)} \sim p(x_0)$ \label{lin:initsample}
\For{$t = 0, \ldots H-1$}
\State Sample $x_{t+1}^{(i)} \sim \T_x(x_{t+1} \mid x_{t}^{(i)}, u_t)$ \label{lin:getactionandsample}
\State $z_{t+1}^{(i)} \gets$ \Call{raytrace}{$x_{t+1}^{(i)}$} \label{lin:sampleobs}
\EndFor
\State \Return $z_{1:t}^{(i)}$
\EndFunction
\Statex

\Function{Raytrace}{$x$}
\ForAll{incidence angles $\alpha$ of the LRF}
\ForAll{cells $c$ along a beam starting at $x$ in direction $\alpha$}
\If{$(c,\cdot) \notin m^p$} \label{lin:inmap_start}
\State Sample $o \sim p_c$
\State $m^p \gets m^p \cup \{(c, o)\}$\label{lin:inmap1}
\Else
\State $o \gets \{o \mid (c,o) \in m^p\}$
\EndIf \label{lin:inmap_stop}
\State Sample $j \sim p(z' \mid o, x)$ \label{lin:samplecell}
\State $z \gets z \cup \{(c,j)\}$
\If{$j = 1$}
\State \textbf{break} \label{lin:samplebreak}
\EndIf
\EndFor
\EndFor
\State \Return $z$
\EndFunction
\end{algorithmic}
\end{algorithm}

As map occupancies are sampled only as required (when a simulated LRF ray enters a cell), computational savings are accrued compared to the naive approach of always sampling occupancy values for every cell before raytracing.
After the observation samples $z_{1:H}^{(i)}$ have been obtained, the corresponding values of MI can be computed applying the approximation of Theorem~\ref{thm:mi}.
For integration with the SMC method (Algorithm~\ref{alg:smc}), the sampling algorithm (Algorithm~\ref{alg:sampling}) is executed independently for the control action sequence of each particle.
For integration with the POMCP method (Algorithm~\ref{alg:mcts}), the action $u_t$ on Line~\ref{lin:getactionandsample} of Algorithm~\ref{alg:sampling} is obtained either from the tree policy or from the rollout policy.\section{Simulated exploration experiments}
\label{sec:simulation}
We studied the POMDP based planning in a set of exploration experiments in three simulated domains.
In particular, the effect of prior information and the optimization horizon $H$ were examined.
The proposed planning approaches were compared to myopic (one-step greedy) planning, which is the special case of the proposed approach with $H=1$ (see e.g.~\cite{Stachniss2005,Sim2005,Kollar2008}), and frontier-based exploration~\citep{Yamauchi1997}.

We begin by considering an illustrative toy example in Subsection~\ref{subsec:toy_example}.
In Subsection~\ref{subsec:fwdsim_performance}, we study the performance of the proposed POMDP based planning approach in three domains: a maze, an office, and an outdoor environment, each having different scale and other properties.
These simulations were implemented in MATLAB.
Based on the results, we propose a further improvement of the POMDP approach, and compare the improved method with frontier-based exploration in Subsection~\ref{subsec:frontier_comparison}.

\subsection{An illustrative example}
\label{subsec:toy_example}
We first consider the effect of prior information and optimization horizon for making a single decision in a small toy domain as shown in Figure~\ref{fig:simple}.
The white and black cells in the map are unoccupied and occupied cells, respectively.
The robot was equipped with a simulated laser range finder with a maximum range of 2 metres for sensing the environment state, and had three possible action choices to select between as indicated by the trajectories overlaid in the figure.
Each action was designed specifically to have a varying effect on the achievable mutual information over a horizon of several decisions.
Action $a_2$, corresponding to moving straight ahead, initially does not provide much information about the environment as the view of the range finder is partially blocked by the corridor leading towards the top of the figure.
Despite this, executing $a_2$ eventually leads the robot to the large open area in the top part of the environment.
In contrast, actions $a_1$ and $a_3$ yield greater immediate information gain, but ultimately lead to dead-ends at the left and right bottom parts of the map.
The boundary of the area was assumed to be known to the robot, i.e., the robot can detect when an action would take it outside the boundary of the environment.

\begin{figure}
\centering
\includegraphics[width=\columnwidth]{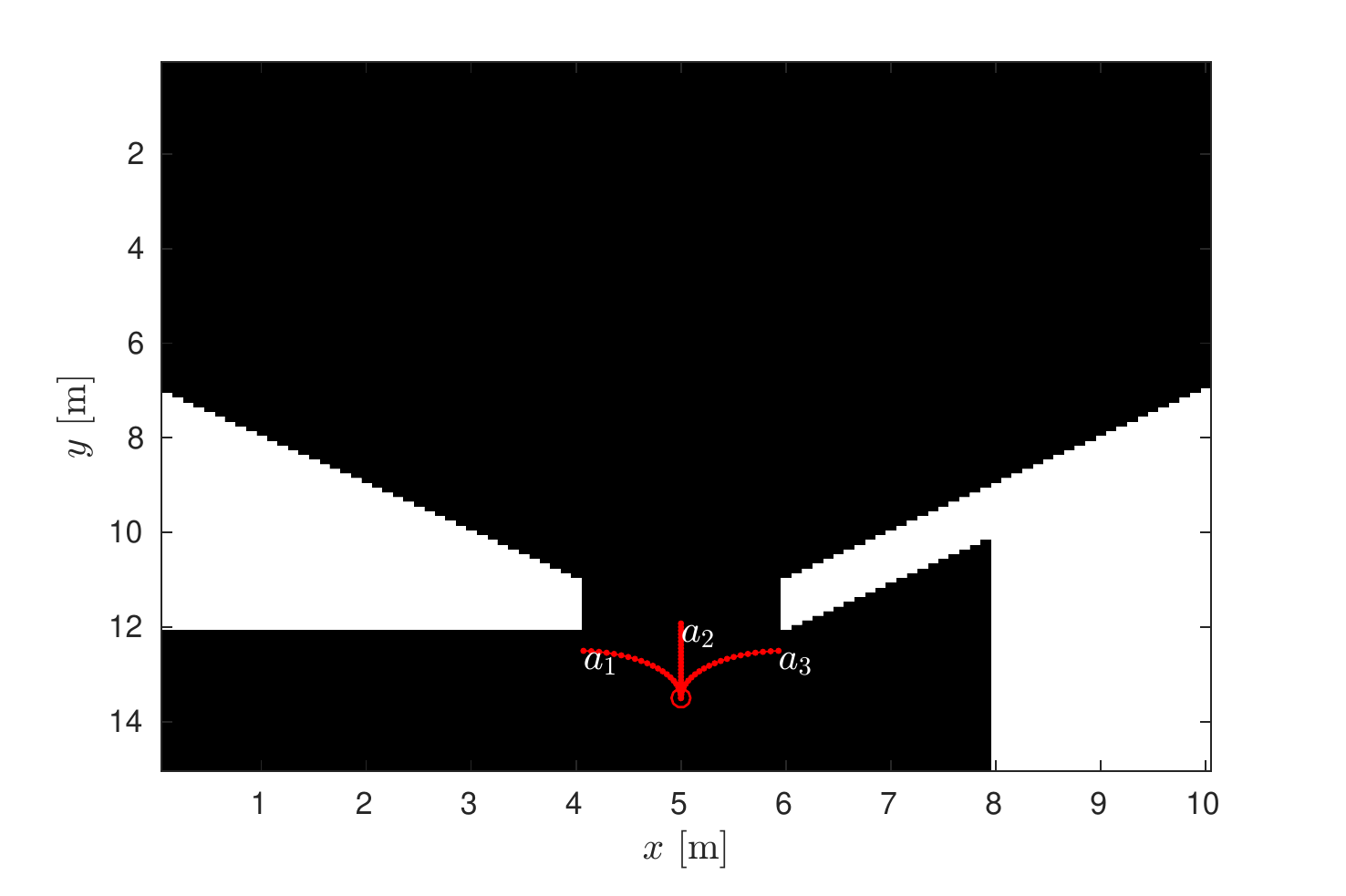}
\caption{A test map for a single decision. The robot's starting location is indicated by the circle marker, and three trajectories are labeled by the corresponding actions $a_1$, $a_2$ and $a_3$.}
\label{fig:simple}
\end{figure}

We note that in a real application also intermediate choices with a curvature between that of $a_1$ and $a_2$, or $a_2$ and $a_3$, would be available for the robot.
However, for the purpose of this example, we did not consider them.
In the subsequent experiments on larger maps, the SMC method considers the full uncountable action space.

As prior information, the robot recorded a single observation with the laser range finder from the initial location, and updated the map occupancy probabilities accordingly.
The robot's initial location was selected so that this observation did not reveal the presence of the occupied cells in the map.
Additionally, two cases were considered: one with no additional prior information, and another one with an informative prior that indicated the occupied areas in the environment (the white cells in Figure~\ref{fig:simple}), but leaving occupancy probabilities elsewhere as they were.

\begin{table}[ht]
\centering
\caption{Values of actions in the small map as function of the optimization horizon $H$ and for a non-informative and informative prior, shown with the true optimal action (bottom row).}
\label{tab:simple}
\begin{tabular}{cccccc}
\toprule
                 Prior                      & Action  & $H=1$ & $H=2$ & $H=3$ & $H=4$   \\ \midrule
\multirow{3}{*}{Uniform}   & $a_1$ & 102.4    & 173.6    & 235.7    & 212.4  \\
                                       & $a_2$ & 101.8    & 167.1    & \textbf{248.8}    & \textbf{325.4}   \\
                                       & $a_3$ & \textbf{106.5}    & \textbf{177.3}    & 235.0    & 223.0  \\ \midrule
\multirow{3}{*}{Informative}     & $a_1$ & 81.8    & 145.9    & 189.7    & 200.5    \\
                                       & $a_2$ & 78.9    & \textbf{157.0}    & \textbf{231.0}    & \textbf{298.5}     \\
                                       & $a_3$ & \textbf{97.2}    & 129.4    & 172.5    & 140.8    \\ \midrule
Opt. action                         &       & $a_3$    & $a_2$    & $a_2$    & $a_2$    \\ \bottomrule
\end{tabular}
\end{table}

We applied the POMCP algorithm (Algorithm~\ref{alg:mcts}) to plan a single decision.
Table~\ref{tab:simple} shows the value estimate of each action for either case of prior information as a function of the optimization horizon $H$.
The action with the greatest value is indicated by a bold font.
Also the optimal action, as found by an exhaustive search over all possible action sequences while assuming perfect sensing, is shown for reference.
As $H$ increases, action $a_2$ is chosen in the case of no additional prior for $H\geq 3$ and in case of the informative prior for $H\geq 2$.
In case of no additional prior information, $a_2$ is eventually preferred as the other actions eventually lead (after two decision epochs in the planning phase) to a dead-end where the robot cannot progress any further and thus cannot collect more information.
In the case of an informative prior, however, $a_2$ is preferred already for $H=2$: as the occupied cells are indicated, the robot is able to avoid the dead-ends blocked by the occupied cells near the bottom corners of the map.
We also note that the difference in the values between the recommended and second-best action tend to be greater for the informative prior, indicating that the algorithm is able to more confidently distinguish between the actions.

\subsection{Performance of POMDP based planning}
\label{subsec:fwdsim_performance}
We next examine the effect of prior information and the optimization horizon in domains larger than the toy example presented above.
Subsection~\ref{subsubsec:sim_setup} outlines the experimental setup, and Subsection~\ref{subsubsec:results} presents the results.

\subsubsection{Experimental setup}
\label{subsubsec:sim_setup}
Three environments as illustrated in Figure~\ref{fig:environments} were examined: a maze (Figure~\ref{fig:maze}), office-like (Figure~\ref{fig:mit}), and an open outdoor environment (Figure~\ref{fig:fr}).
White and black cells are unoccupied and occupied, respectively, and gray cells indicate undefined or unobserved cells.
The sizes of the environments varied from approximately 10-by-10 meters (maze) to 250-by-250 meters (outdoor).
The office and outdoor maps were obtained from an online data repository\footnote{Robotics Data Set Repository Radish~\citep{Radish}. We acknowledge Cyrill Stachniss (and Giorgio Grisetti) for providing the office (outdoor) environment data.}.
The $x$ and $y$ coordinates of the robot's starting location in each environment were as follows.
In the maze environment, $(x,y) = (7.9, 8.6)$; in the office environment, $(38.5,58.0)$; and in the outdoor environment, $(90.0, 106.0)$.

\begin{figure}[t!p]
\centering
	\begin{subfigure}[t]{\columnwidth}
        \includegraphics[width=\textwidth]{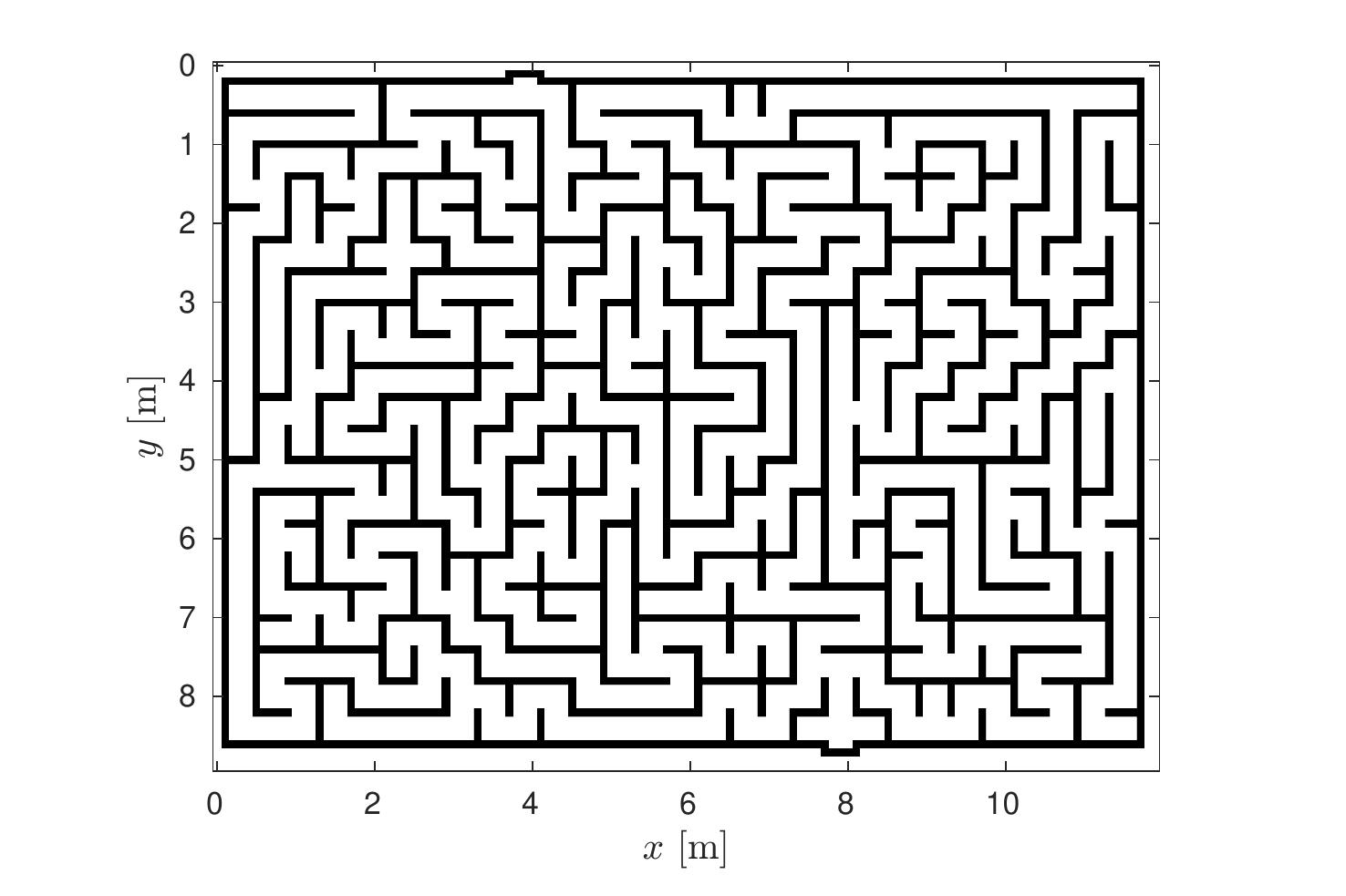}
        \caption{Maze}
        \label{fig:maze}
    \end{subfigure}
    
    \begin{subfigure}[t]{\columnwidth}
        \includegraphics[width=\textwidth]{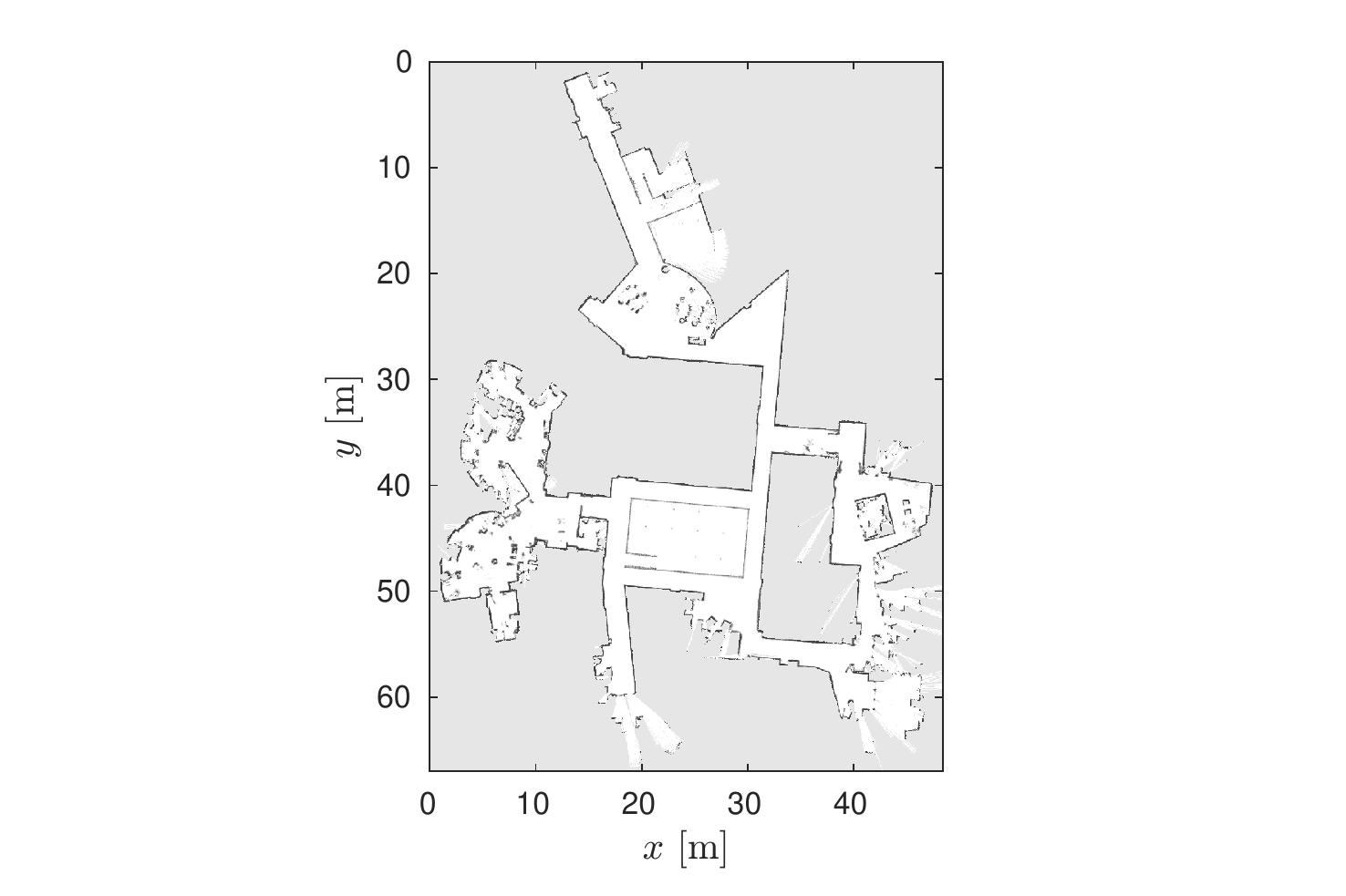}
        \caption{Office}
        \label{fig:mit}
    \end{subfigure}
    
    \begin{subfigure}[t]{\columnwidth}
        \includegraphics[width=\textwidth]{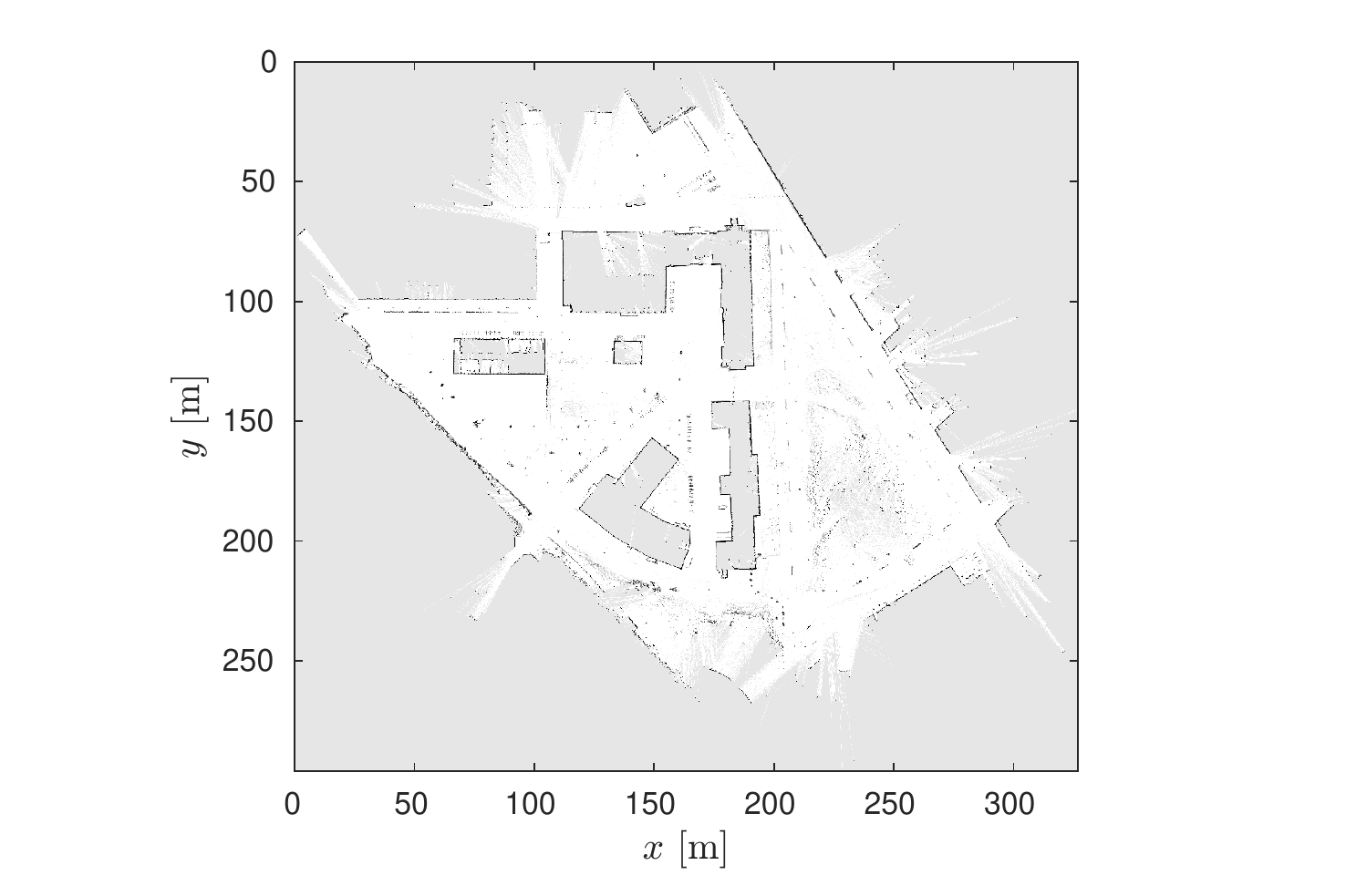}
        \caption{Outdoor}
        \label{fig:fr}
    \end{subfigure}
\caption{Environments examined in the simulation experiments. }
\label{fig:environments}
\end{figure}

The simulated robot is able to turn in place, and it is controlled by applying a linear and angular velocity.
For the maze and office environments, the robot was assigned a maximum linear velocity of 1 meter per second, and for the outdoor environment 3 meters per second.
The angular velocity was constrained to be between -0.5 and 0.5 radians per second.
The robot was equipped with a laser range finder with a maximum range of 4 meters (maze and office environments), or 15 meters (outdoor environment).
Time was discretized into 1 second intervals per decision epoch, and robot trajectories were simulated applying the velocity motion model from~\cite[Ch.~5.3]{Thrun2006}.

For planning with POMCP (Algorithm~\ref{alg:mcts}), the control space was discretized uniformly to 9 linear velocity and 7 angular velocity values, resulting in a total of 63 control actions.
The exploration bonus $e$ was set to 50 for the maze and office environments and 100 for the outdoor environment.
A total of $N_s=3000$ simulation episodes were executed when planning each action.

For planning with SMC (Algorithm~\ref{alg:smc}), the control space does not need to be discretized, but the kernels for sampling control signals must be specified.
It is more likely that the robot obtains more information the greater the amount of unexplored cells covered by its sensors. 
Thus, to increase the likelihood of observing more unexplored cells, the initial control sequences for iteration $l=1$ were sampled assigning higher probability to linear velocities near the robot's maximum velocity.
Angular velocities were sampled uniformly at random. 
The number of particles was $M = 100$, and a threshold value of $M_t = M/4$ was set to trigger resampling.
For iterations $l>1$, we applied a Gaussian kernel with variance proportional to $(1 / l^2)$ times the full range of possible linear and angular velocity values.
Control actions violating the aforementioned maximum and minimum values for the velocities were rejected.
A cooling schedule $\nu_l = 2l+5$ was applied, with $l_{\text{max}} = 7$.

During the simulations, control actions were checked by examining their corresponding trajectories.
If the trajectory entered cells with occupancy probability greater than 0.2, the control action was rejected.
The robot was also not allowed to move outside the map area or enter unknown areas (grey cells in Figure~\ref{fig:environments}).

For prior information, a non-informative and an informative prior were considered.
A non-informative prior corresponds to the case where the map area is initially unknown to the robot, with the exception of information provided by a single observation recorded at the robot's starting location.
The informative prior was designed such that in the maze environment it corresponds to telling the robot the locations of all the maze walls (as indicated by the black cells in Figure~\ref{fig:maze}) by assigning them occupancy probability of 0.99, and in the office and outdoor environments indicating the unknown area (the gray cells in Figures~\ref{fig:mit} and~\ref{fig:fr}) by again assigning them an occupancy probability of 0.99.
For each type of prior and each environment, the experiment was repeated five times.

\subsubsection{Results}
\label{subsubsec:results}

\begin{figure}[t!p]
\centering
	\begin{subfigure}[t]{\columnwidth}
        \includegraphics[width=\textwidth]{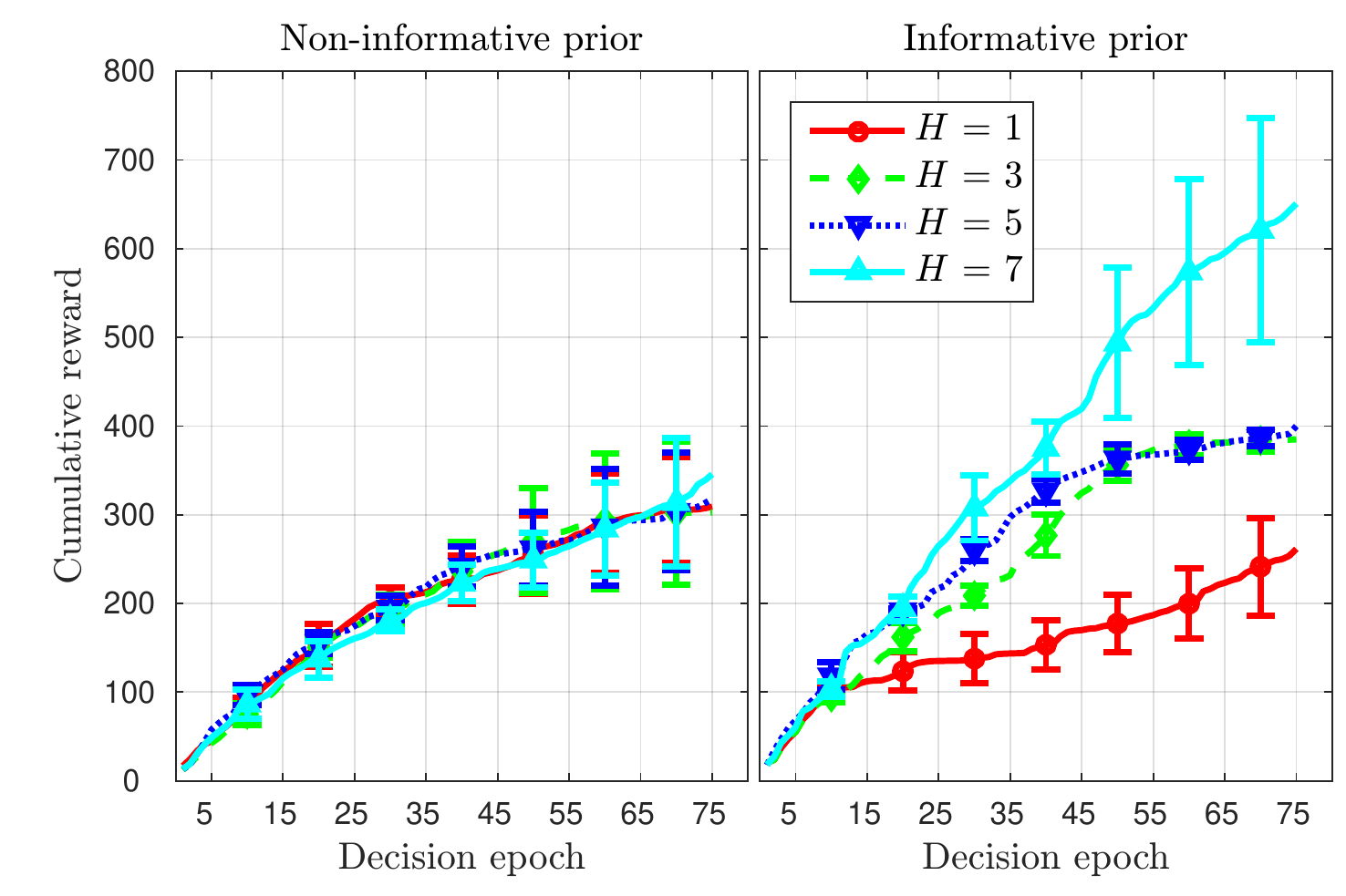}
        \caption{Maze}
        \label{fig:rmaze}
    \end{subfigure}
    
    \begin{subfigure}[t]{\columnwidth}
        \includegraphics[width=\textwidth]{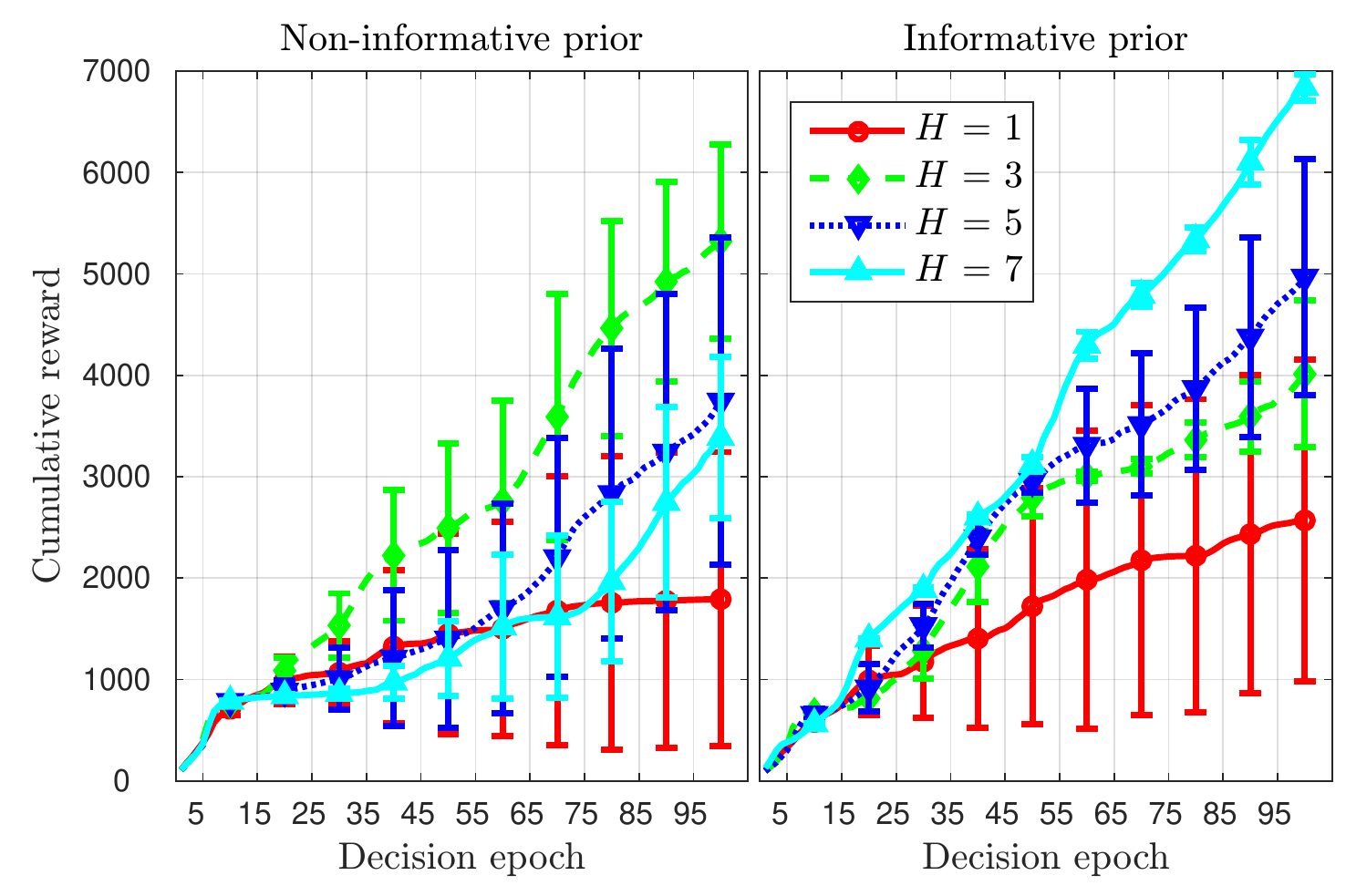}
        \caption{Office}
        \label{fig:rmit}
    \end{subfigure}
    
    \begin{subfigure}[t]{\columnwidth}
        \includegraphics[width=\textwidth]{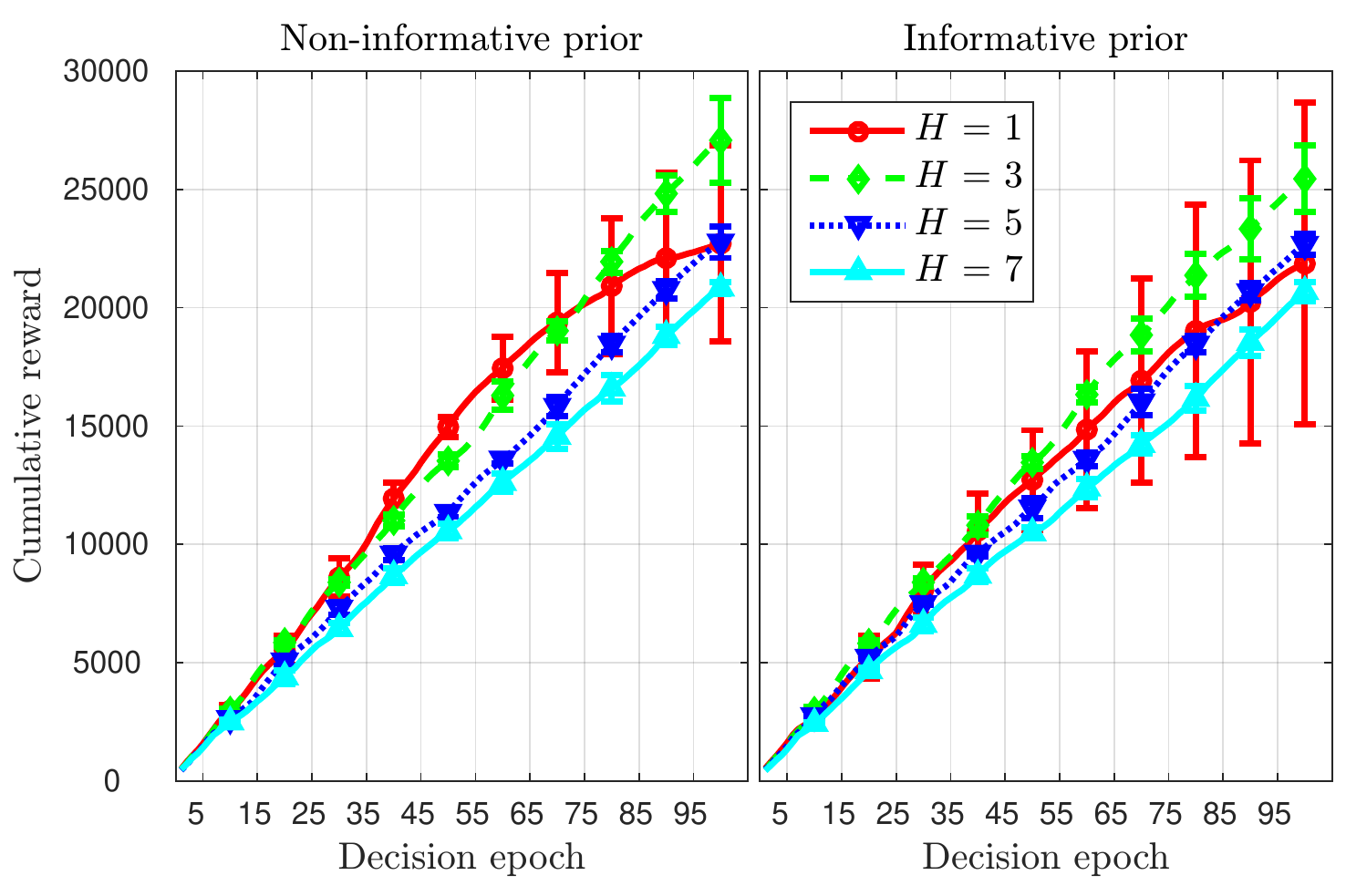}
        \caption{Outdoor}
        \label{fig:rfr}
    \end{subfigure}
\caption{Cumulative reward in the maze, office, and outdoor environments as a function of optimization horizon $H$ applying POMCP (Algorithm~\ref{alg:mcts}). The lines with the markers show the means over 5 simulation runs, while the horizontal bars indicate the 95\% confidence intervals. In each subfigure, the left panels show results for a non-informative prior and right panels for an informative prior.}
\label{fig:rewards}
\end{figure}

\begin{figure}
\centering
\includegraphics[width=\columnwidth]{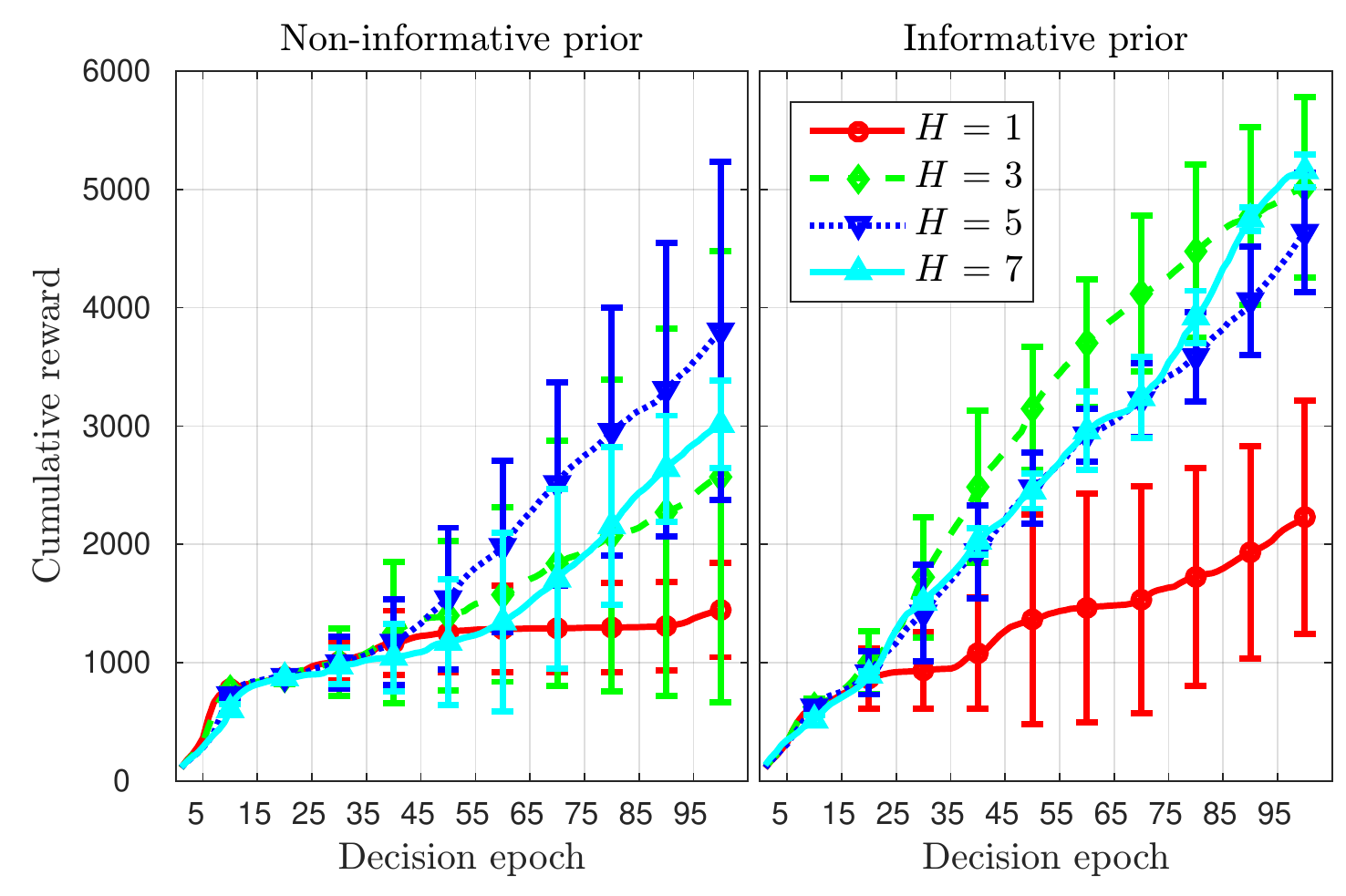}
\caption{Cumulative reward in the office environment applying SMC (Algorithm~\ref{alg:smc}). The lines with the markers show the means over 5 simulation runs, while the horizontal bars indicate the 95\% confidence intervals. The left panel shows results for a non-informative prior, and the left panel for an informative prior.}
\label{fig:rmit_smc}
\end{figure}

Figure~\ref{fig:rewards} shows comparisons of the cumulative mutual information reward collected as a function of the decision epoch when the POMCP algorithm was applied, in each of the environments studied and for optimization horizons $H=1$, 3, 5, and 7.
The lines indicate the mean values, while the horizontal bars indicate the 95\% confidence intervals of the mean values.
For the non-informative prior, increasing the optimization horizon did not consistently result in an increased cumulative reward in any of the environments.
This is due to the fact that in this case the map samples (and hence the corresponding observation samples) are drawn assuming a uniform prior on the map cells' occupancy, which poorly reflects the true configuration of the environment.

In contrast, for the informative prior, statistically significantly greater cumulative rewards were obtained in the maze and office environments when increasing $H$, as indicated by Figures~\ref{fig:rmaze} and~\ref{fig:rmit}.
A similar effect was however not observed in the outdoor environment (Figure~\ref{fig:rfr}).
In the maze and office environments, there are dead-ends, i.e.\ locations where the robot has to travel through already-explored areas to reach new, unexplored areas.
Choosing to traverse towards a dead-end may seem informative in the short term, but yields poor information gain later.
Dead-ends may be avoided if sufficient prior information is available and can be exploited, i.e.\ the optimization horizon is great enough to indicate the presence of a dead-end.
Turning a corner in the maze may severely limit the trajectory options available at subsequent decision epochs.
In contrast, the outdoor environment is primarily open, and committing to a certain trajectory usually does not impose such limitations.
The experimental results suggest that the effect of prior information on exploration performance is lesser in such an open environment.

Similar results were observed for the SMC algorithm, as shown for the office environment in Figure~\ref{fig:rmit_smc}.
Here, the lines indicate the mean values, while the horizontal bars indicate the 95\% confidence intervals of the mean values.
Comparing the right panel of the figure to the right panel of Figure~\ref{fig:rmit}, we note that for the informative prior the effect of increasing the horizon $H$ for SMC does not seem to be as significant as for POMCP.
From the data we determined that for POMCP the robot decided to move from its initial position at $(38.5,58.0)$ towards the large open area around coordinates $(25,50)$ (see Figure~\ref{fig:mit} for the map with coordinate axes) in 1 out of 5 cases for $H=5$, and in all of the five cases for $H=7$.
In contrast, for SMC, the robot instead moves towards the corridor around coordinates $(45,50)$ in 4 out of 5 cases for $H=5$, and in 3 out of 5 cases for $H=7$.
Moving to the open area results in a much greater information gain over a long sequence of decisions, explaining the reason for the difference in favour of POMCP for $H=7$.
We do not believe this to be an indication of the superiority of the POMCP approach, but rather a byproduct of a statistical evaluation of the algorithms combined with the stochastic nature of the solution algorithms themselves.

Overall, the results of the experiments indicate that non-myopic planning ($H>1$) is useful when the available prior information can be leveraged to find more informative local trajectories.
This is the case in particular for the maze environment with an informative prior (Figure~\ref{fig:rmaze}), and the office environment with an informative prior (Figures~\ref{fig:rmit} and~\ref{fig:rmit_smc}).

The information the robot has about the map affects whether increasing the optimization horizon is useful.
In the maze environment with a non-informative prior, the robot typically has information about the map only in its immediate vicinity as its view is blocked by nearby maze walls.
Thus, most local trajectories, short or long, lead the robot outside this area, and increasing the optimization horizon is not useful if the prior information about these areas is not accurate (Figure~\ref{fig:rmaze}).
The office environment is more open, allowing the robot to observe the map in a larger area around it.
The information gained in this way leads to improvements in exploration performance when increasing the optimization horizon, even in the case of an otherwise non-informative prior.
Once the local trajectories considered are such that they bring the robot outside the area about which it has accurate information, performance does not improve anymore: for instance, in the left panel of Figure~\ref{fig:rmit}, this happens for $H>3$.
One further case where increasing the optimization horizon is not useful if the environment is such that all local trajectories are roughly equally informative.
This is the case in the outdoor environment, which consists primarily of open areas (Figure~\ref{fig:rfr}).

\subsection{Combining POMDP based and frontier based exploration}
\label{subsec:frontier_comparison}
The proposed POMDP approach is capable of handling uncertainty in robot and environment states in a principled manner, allowing quantifiable trade-offs between uncertainty reduction and the cost of control actions.
However, the optimization horizon has to be bounded to maintain computational feasibility. 
Thus only \emph{local} reward information can be considered, leading to susceptibility to local minima.

A frontier-based exploration method, see e.g.\ \cite{Yamauchi1997}, detects frontiers between free and unknown areas in the current map.
One of the frontiers is selected as the exploration target, based on, e.g.\, the distance from the robot's current position to the frontier, or the size of the frontier.
The robot is then commanded to move towards the selected target.
Thus, frontier exploration can exploit \emph{global} knowledge of frontiers towards unexplored areas over the whole map.

POMDP based exploration can fail when local information available within the optimization horizon is not sufficient to find an action with good exploration performance, for example if no unexplored area is reachable within the optimization horizon.
To reduce the effect of these types of failures, we implemented an exploration method that combines POMDP based and frontier based exploration.
The method applies POMDP based planning and executes the actions thus found until it reaches a situation where either 1) all local action sequences are below a given informativeness threshold, as measured by the expected total MI for them, or 2) all valid local action sequences correspond to trajectories with a length less than a given threshold value, indicating a possible dead-end.
When either of the conditions triggers, a frontier exploration method is queried once for a frontier to be assigned as the next target for the robot.
When this frontier is reached, the POMDP planning phase is again resumed.

We will argue that combining POMDP based and frontier based exploration in this way presents a stronger alternative to applying either approach alone.
To support this argument, we executed a series of experiments comparing such an approach to only applying frontier exploration.
The software used in this subsection was implemented in C++, and is available at \url{https://goo.gl/ENGkIf}.

\subsubsection{Experimental setup}
\label{subsubsec:comparison_setup}
We chose to conduct the experiments in the office environment (Figure~\ref{fig:mit}), as based on Subsection~\ref{subsubsec:results} increasing the optimization horizon $H$ there improves performance both in the case of non-informative and informative prior.
We implemented the method combining POMDP based and frontier based exploration applying the SMC algorithm and the basic frontier exploration algorithm as presented e.g.\ in \cite{Yamauchi1997}.
The SMC method was chosen since it dynamically generates feasible control signals and trajectories, without need to manually define a fixed set of primitive control actions from which the trajectories are constructed.
This method was then compared to only applying the basic frontier exploration method.

Each of the exploration experiments was repeated five times.
Each repetition was terminated either at a timeout of 400 seconds, or if a failure happened that either caused the robot to get stuck, or the planner software to fail to produce a result.
As in Subsection~\ref{subsubsec:sim_setup}, the initial information provided to the robot consisted of a single observation recorded at the starting location, in addition to possible prior information.

For the SMC algorithm, we set the number of particles to $M = 20$, and the resampling threshold to $M_t = M/4$.
A cooling schedule $\nu_l = 2l+5$ was applied, with $l_{\text{max}} = 4$.
The simulated robot and the kernels applied for sampling control actions were as described in Subsection~\ref{subsubsec:sim_setup}.
We applied maps with a resolution of 0.05 meters per cell, and set the threshold for triggering frontier exploration at a total trajectory length of 0.5 meters or expected MI of less than 50 bits.
The 50 bit MI threshold corresponds to 50 completely unknown cells (occupancy probability 0.5) becoming completely known (occupancy probability 0 or 1), so, roughly speaking, if the robot expected to explore less than 0.125 square meters of new area, or move less than 0.5 meters, it would trigger the frontier exploration method once instead of continuing with POMDP based exploration.

A conceptual overview of our software implementation is shown in Figure~\ref{fig:implementation}. 
The robot is interacting with the environment, shown on the bottom left hand side of the figure.
The outputs from the environment, i.e., observations $z_t$, are processed by the SLAM algorithm to revise the belief state $b_t=p(x_t,m_t)$.
Based on the current belief state, the planner module shown on the top right of the figure computes an optimized sequence of control actions $\hat{u}_{t:t+H-1}$.
A controller module shown on the bottom right of the figure decides which control action $u_t$ to finally apply.

\begin{figure}
\centering
\includegraphics[width=\columnwidth]{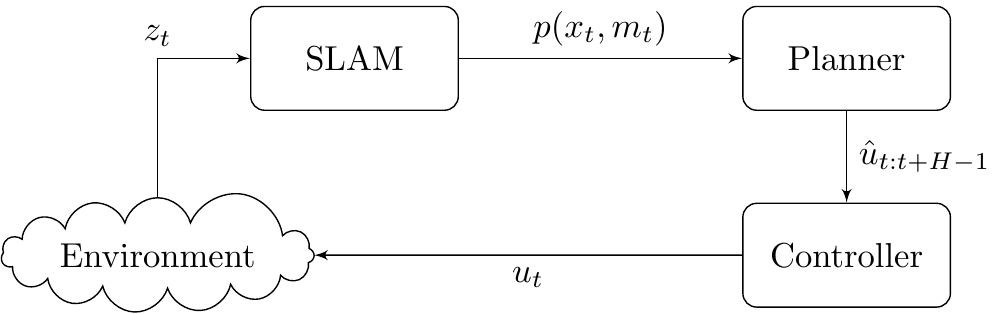}
\caption{A software implementation of the planner. The cloud-shaped block depicts the environment the robot is interacting with. The rectangular blocks indicate software modules, and arrows indicate propagation of signals between the modules, labelled by the mathematical symbol of the signal.}
\label{fig:implementation}
\end{figure}

The simulation was implemented in the Stage robot simulator~\citep{Vaughan2008}, integrated with the Robot Operating System (ROS) framework~\citep{Quigley2009} where the exploration algorithms were implemented.
The SLAM module we applied (see Figure~\ref{fig:implementation}) was the RBPF SLAM algorithm based on~\cite{Grisetti2007}.
As the frontier exploration method, we applied an open source implementation provided at \url{http://wiki.ros.org/frontier_exploration}.

We remark that there exist multi-robot frontier-exploration techniques that make use of prior information, e.g., in the form of semantic information about types of areas in the map~\citep{Stachniss2009,Quattrini2015}.
Suitable applications for these methods include for example search and rescue, where semantic information about the types of rooms, e.g., office or lobby, can help guide the robots to promising search areas.
However, in the experiments here, no such semantic information was available.

\subsubsection{Results}
\label{subsubsec:comparison_results}
Overall, in the 40 exploration runs applying the proposed method (5 runs for each of the 4 horizons, with 2 cases for prior information), we observed 10 failures with the experiment terminating before the timeout of 400 seconds.
There were 8 cases of planner failure, either due to being unable to detect a frontier after trying POMDP based exploration, or due to inability to find a feasible path towards the requested exploration target. 
There were 2 cases in which the robot got stuck in the simulator after a collision.
Among all failure cases, the earliest time of occurrence was at 195 seconds, while the average time of failure occurrence was at 273 seconds.
In the 5 runs with the pure frontier exploration method, there was 1 failure at 197 seconds due to inability to find any frontiers.

\begin{figure}[t!p]
\centering
	\begin{subfigure}[t]{\columnwidth}
        \includegraphics[width=\textwidth]{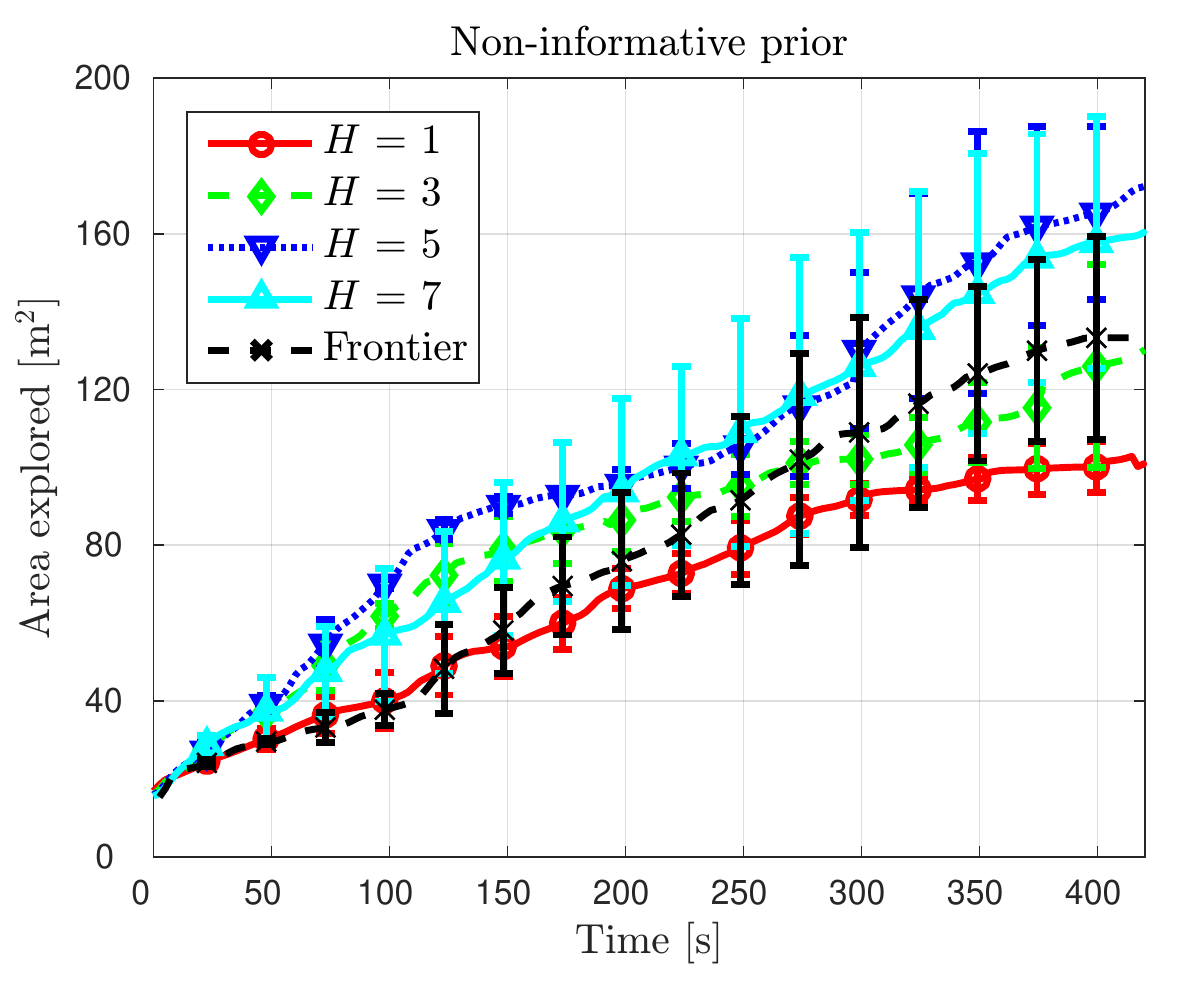}
        \caption{Non-informative prior}
        \label{fig:rcomparison_noninformative}
    \end{subfigure}
    
    \begin{subfigure}[t]{\columnwidth}
        \includegraphics[width=\textwidth]{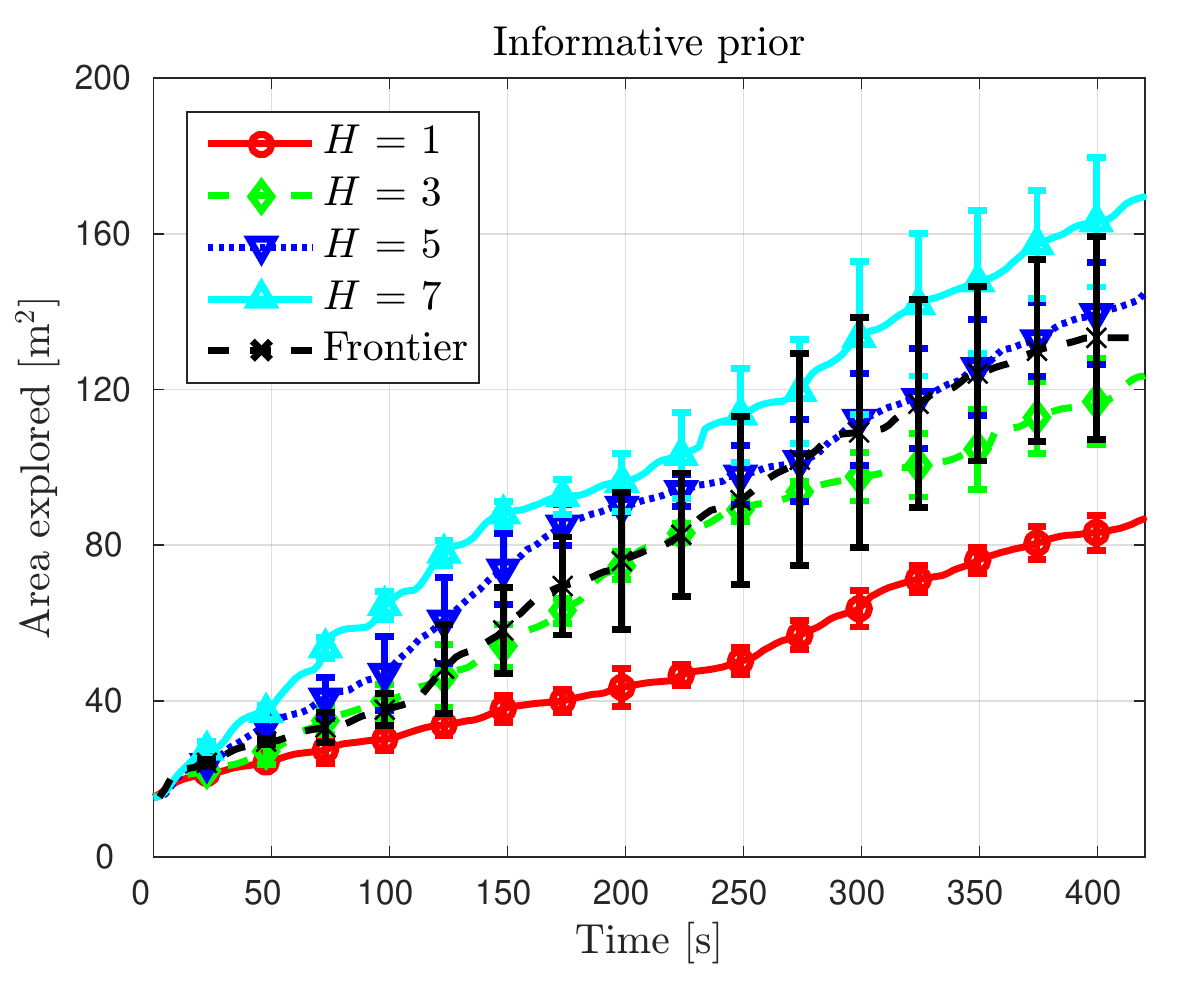}
        \caption{Informative prior}
        \label{fig:rcomparison_informative}
    \end{subfigure}
\caption{Comparison of POMDP based exploration (with optimization horizon $H$) combined with frontier based exploration with pure frontier exploration in the office environment. The lines with the markers show the mean area explored over 5 simulation runs, while the horizontal bars indicate the 95\% confidence intervals. Figure~\ref{fig:rcomparison_noninformative} shows the results for a non-informative prior, and Figure~\ref{fig:rcomparison_informative} for an informative prior.}
\label{fig:rcomparison}
\end{figure}

Since frontier exploration does not consider mutual information in selecting exploration targets, we present the comparison results in terms of the total area explored as a function of the time spent exploring.
Figure~\ref{fig:rcomparison} shows the mean area explored and its 95\% confidence interval over each of the five experiments, for each of the methods applied.
The results shown take into account that not every experiment ran until the timeout of 400 seconds.

There seems not to be a significant difference in the area explored between the cases of non-informative or informative prior.
However, we can see that the consistency of the results is better in case of the informative prior: the mean area explored increases monotonically as a function of $H$, and the confidence intervals are smaller than for the non-informative prior.

We note that with the exception of the myopic case $H=1$, the proposed method does not perform significantly worse than pure frontier exploration during any time interval.
It is interesting to note that although the area explored at the end of the experiment is not significantly greater for the proposed method than for pure frontier exploration, there are time intervals where the difference is significant in favour of the proposed method.
Such intervals can be seen for example from 75 seconds to about 200 seconds for the non-informative prior, $H=5$, and from 75 seconds to about 175 seconds for the informative prior, $H=7$.

When we examined the trajectories the robot chose in each of these cases more closely, we discovered that this difference is primarily due to a different choice of initial exploration target.
Recall that the robot started at coordinates $(38.5,58.0)$ (see Figure~\ref{fig:mit}).
The proposed method, especially with an informative prior, favours moving towards the corridor at coordinates $(40, 50)$, as moving instead to the room at coordinates $(40,60)$ is noted to quickly lead to a dead-end providing no further information gain.
This is however not considered by frontier exploration, which favoured moving first to the frontier in the room, and then returning to explore other areas once the dead-end was discovered.

A further example of the usefulness of prior information can be seen in Figure~\ref{fig:smc_convergence}.
The prior information indicates the outer edges of the environment, shown in the figure by the black lines bordering the unknown area.
Initially, the trajectories sampled are distributed evenly leading towards the corridor on the left hand side of Figure~\ref{fig:smc_convergence_a} and the area on the right hand side of the figure.
Availability of prior information however indicates that a smaller unknown area will be visible when the robot moves to the left hand side corridor, compared to moving to the potentially large open area on the right hand side.
As the total MI related to either trajectory choice is evaluated, the SMC algorithm eventually discards trajectories towards the corridor and converges on a trajectory bringing the robot towards the open area instead, as seen in Figure~\ref{fig:smc_convergence_c}.
In cases with a non-informative prior, both trajectory options result in roughly equal expected MI.

Since the proposed method combines POMDP based and frontier based exploration, it is illustrative to consider how often either of the exploration techniques is applied.
Table~\ref{tab:callnumbers} indicates the average number of times the proposed method applied either POMDP based or frontier based exploration to select the next target where the robot should move to explore the environment.
In the majority of the cases, POMDP based exploration is preferred.
As expected, shorter optimization horizons $H$ lead the robot more frequently to situations where no informative local trajectories can be found, and subsequently frontier exploration is applied more frequently.
We also note that applying the informative prior seems to result in less calls to frontier exploration, indicating further the usefulness of prior information for POMDP based exploration.
Overall, there is a slightly decreasing trend in the total number of calls to either method as a function of $H$, since the robot tends to traverse a longer trajectory before considering the next exploration target for longer optimization horizons.

\begin{table}[ht]
\centering
\caption{The average number of function calls in the proposed method over five experiments to either the POMDP based or the frontier based exploration method. Results are shown as a function of the optimization horizons $H$, and for both the case of non-informative and informative prior information.}
\label{tab:callnumbers}
\begin{tabular}{@{}cccc@{}}
\toprule
                                                 &$H$& POMDP & Frontier \\ \midrule
\multirow{4}{*}{Non-informative prior}           & 1 &     28.2               &   10.0       \\
                                                 & 3 &     22.2               &    5.0      \\
                                                 & 5 &     20.0               &    3.0      \\
                                                 & 7 &     23.2               &    3.4      \\ \midrule
\multirow{4}{*}{Informative prior}               & 1 &     25.2               &    5.0      \\
                                                 & 3 &     25.2               &    3.4      \\
                                                 & 5 &     19.6               &    2.4      \\
                                                 & 7 &     18.6               &    1.8      \\ \bottomrule
                                                 
\end{tabular}
\end{table}

Based on the experimental results, our proposed method can outperform frontier based exploration in terms of area explored in the initial part of exploration when the environment is still largely unknown.
We note this is e.g., due to the ability of the proposed method to avoid dead-ends.
Complete exploration of an environment is the goal in many applications, meaning that also dead-ends should eventually be explored.
We note that although the proposed method combining POMDP based and frontier based exploration will eventually achieve this, the strategy it applies may not be optimal: considering the objective of complete exploration, it might be worthwhile to explore nearby dead-ends immediately rather than returning to them later after exploring other parts of the environment.
When this strategy can be improved upon depends at least on the availability of prior information, and remains to be studied more carefully.
In conclusion, we believe the proposed method is preferable in applications where quickly exploring the local environment for a high information gain is required, and completeness of exploration in the short term is not crucial.\section{Real-world exploration}
\label{sec:experiment}
To verify the feasibility of applying our proposed planning approach in a real application, in this section we present results on exploration tasks in a real environment.
The experimental setup is described in Subsection~\ref{subsec:setup}, and results are reported in Subsection~\ref{subsec:results}.

\subsection{Experimental setup}
\label{subsec:setup}
The experiments were executed at a university campus library.
A partial map of the environment with the robot's starting location indicated is shown in Figure~\ref{fig:library_map}.
The environment features open areas and narrow corridors between bookshelves.
In all experiments, the robot started at the location indicated by the black circle marker in the figure.
A view of the environment showing the robot at its starting location is shown in Figure~\ref{fig:library_view}.
The photograph was taken such that it shows the environment towards the negative values of the $x$-axis in the map of Figure~\ref{fig:library_map}.
The robot had no prior information about the environment beyond a single observation recorded at the starting location.

\begin{figure}
\centering
		\begin{subfigure}[t]{\columnwidth}
        \includegraphics[width=\textwidth]{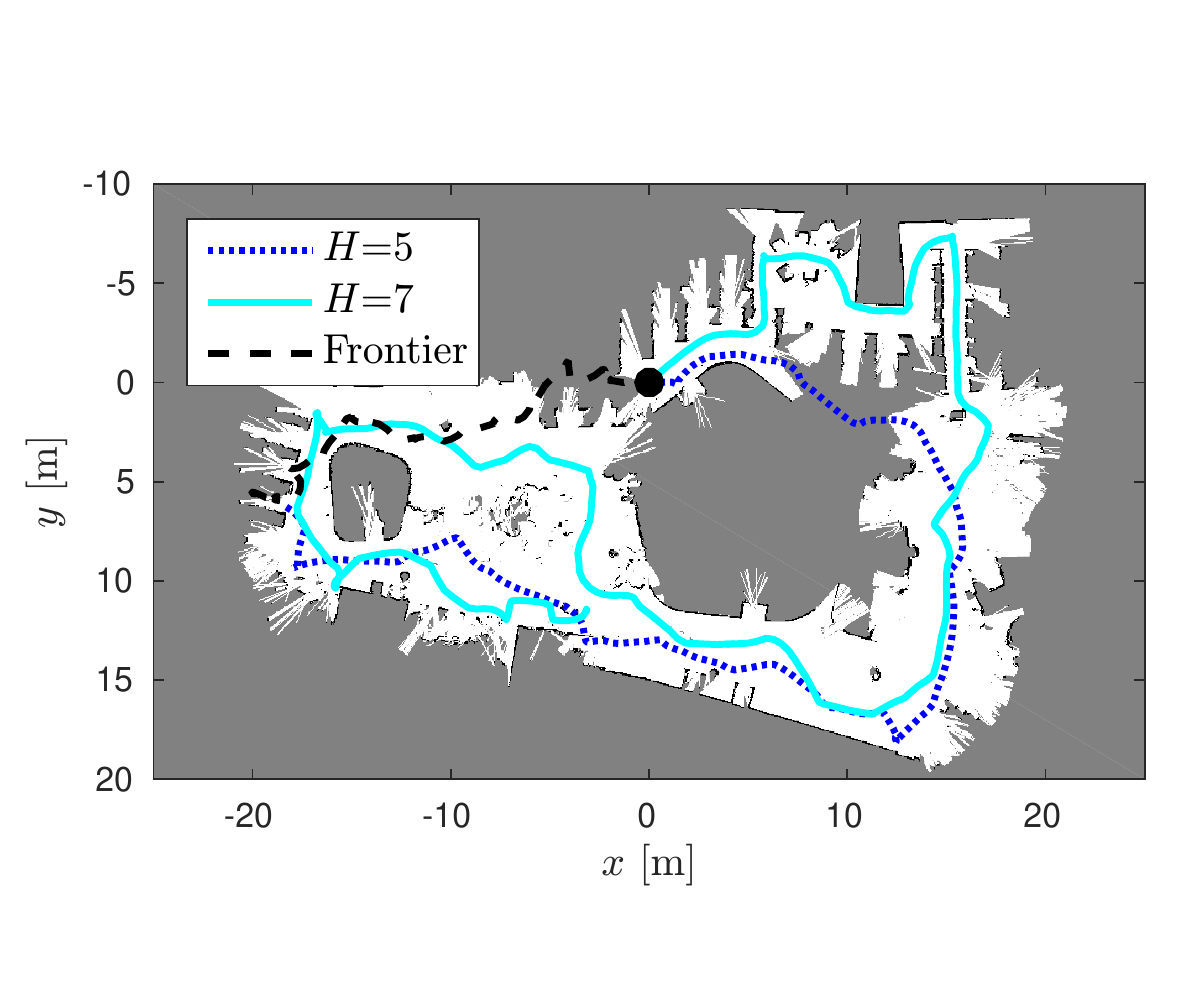}
        \caption{}
        \label{fig:library_map}
    \end{subfigure}
    
    \begin{subfigure}[t]{\columnwidth}
        \includegraphics[width=\textwidth]{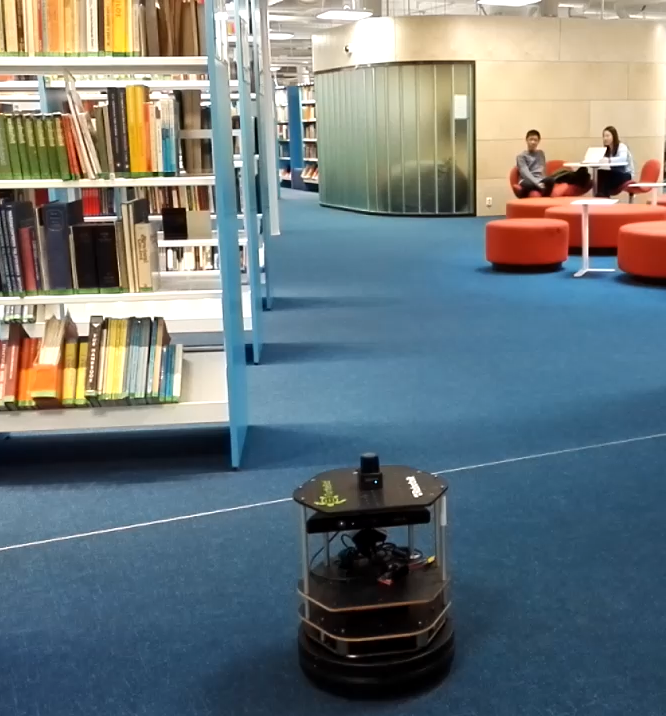}
        \caption{}
        \label{fig:library_view}
    \end{subfigure}
\caption{(a): A partial map of the campus library environment. Occupied, free, and unknown areas are indicated by black, white, and gray cells, respectively.  The robot's starting location is indicated by a filled black circle marker. Overlaid on the map are examples of typical trajectories taken applying the proposed exploration method or frontier based exploration. The map shown corresponds to the data collected while traversing the trajectory for $H=7$. The unknown areas through which the other trajectories travel were observed to be free in the corresponding experiments, but not shown here. (b): A view of the environment from the robot's starting location.}
\label{fig:library}
\end{figure}

The experiments were carried out with a robot such as described in Subsection~\ref{subsubsec:sim_setup}.
The robot's velocities were equal to those of the simulated robot of Section~\ref{subsubsec:sim_setup}.
The robot was equipped with a laser range finder with a maximum range of 4 meters.
The robot can also be seen in Figure~\ref{fig:library_view}.
The software setup and algorithm parameters were as described in Subsection~\ref{subsubsec:comparison_setup}.
The robot's on-board computer, equipped with an Intel i7-4500U multicore processor, 4GB of RAM and running a Linux operating system, was applied to run all of the required software.
We applied the proposed exploration method combining POMDP based and frontier based exploration, with the software implementation described in Subsection~\ref{subsec:frontier_comparison}, and a frontier based exploration method.
Optimization horizons $H=5$ and $H=7$ were applied in the POMDP based exploration method.
With each method, the experiment was repeated 5 times, and each experiment ran until a timeout of 400 seconds or until a failure caused termination of the experiment.

\subsection{Results}
\label{subsec:results}
For the proposed method, we observed 4 failures with $H=5$ occurring at 240, 255, 265, and 376 seconds, and 1 failure with $H=7$ at 390 seconds.
With frontier exploration, we observed 2 failures at 355 and 380 seconds.
The failures were due to the robot colliding with an undetected obstacle and getting stuck, or the planner failing to find an exploration target.
Failures to find an exploration target most often happened at a narrow corridor, where the robot's sensors erroneously indicated that there was no possible path without a collision to exit the corridor.

The minimum, average and maximum planning times required for the SMC algorithm are reported as a function of the optimization horizon $H$ in Table~\ref{tab:plan_times}.
Longer trajectories and larger open areas tend to increase the amount of map cells for which an occupancy value must be sampled to obtain an estimate of the MI, thus increasing the required planning time.
The shortest planning times were observed when the robot was near a wall or in a narrow corridor.
During the planning time, the robot remained in place.

\begin{table}[]
\centering
\caption{The minimum, average and maximum planning times for the SMC algorithm as a function of the optimization horizon $H$.}
\label{tab:plan_times}
\begin{tabular}{@{}cccc@{}}
\toprule
\multirow{2}{*}{$H$} & \multicolumn{3}{c}{Planning time {[}s{]}} \\ \cmidrule(l){2-4} 
                     & Min          & Avg          & Max         \\ \midrule
5                    &  2.60            & 4.24             & 6.14            \\
7                    &  2.03            & 5.21             & 7.61            \\ \bottomrule
\end{tabular}
\end{table}

A summary of the experimental results is presented in Figure~\ref{fig:library_results}, showing the mean area explored and its 95\% confidence interval as a function of time, for each of the exploration methods considered.
For the proposed method with $H=5$, results are only shown up to around 265 seconds, as only two of the experiments ran for a longer time.
In all cases, the confidence intervals drawn take into account the number of active experiments remaining.
The results suggest that in this environment, it is beneficial to apply the proposed exploration approach instead of frontier exploration.
We observed the proposed approach with $H=7$ to result in a significantly greater total area explored than frontier exploration after 200 seconds.

To understand why this is the case, it is useful to study the trajectories traversed by the robot applying each of the exploration methods.
Figure~\ref{fig:library_map} shows typical examples of trajectories for each of the methods.
Frontier exploration, indicated by the black dashed line, consistently chose to move towards the negative $x$ axis from the starting location.
In this direction, there was a row of bookshelves aligned as shown in Figure~\ref{fig:library_view}.
As indicated by the curves in the initial part of the trajectory, frontier exploration would find the nearest frontier behind the corner of the closest bookshelf to the robot.
The robot would then move to it, and repeat in a similar manner until reaching the end of the row of shelves.
Frontier exploration does not attempt to quantify the amount of information that may be gained by such a strategy, but rather deterministically moves the robot towards the closest frontier found.

In contrast, we observed the proposed method to adopt a different strategy.
Since the narrow corridors between bookshelves can often be observed before it is necessary to move into them, this information may be applied to select more informative trajectories.
The robot would generally prefer to avoid narrow corridors, as the proximity of the walls made it unlikely that moving there would provide as much information as moving towards a potentially open area.
Corridors were typically only preferred when there was no alternative, such as shown in the trajectory for $H=7$ indicated by the cyan line in the top right part of Figure~\ref{fig:library_map}.
Trajectories with $H=5$ were similar, as indicated by the blue line in Figure~\ref{fig:library_map} and the results in Figure~\ref{fig:library_results}.
Based on similar results in the previous experiments, we thus believe that increasing the optimization horizon further would not result in better performance in this environment.

\begin{figure}
\centering
\includegraphics[width=\columnwidth]{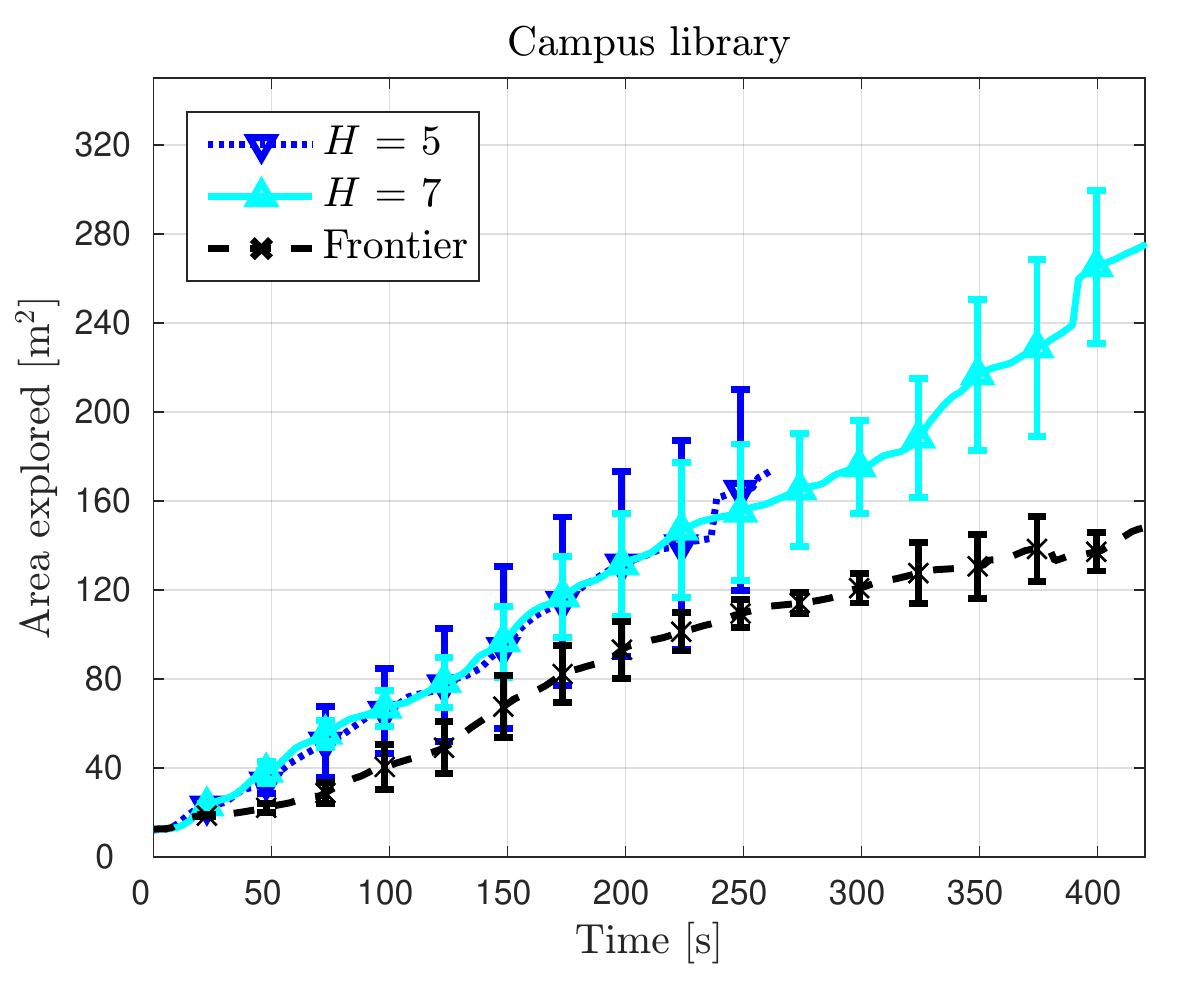}
\caption{Area explored in the campus library environment as a function of time. Results are shown for the proposed method with optimization horizon $H=5$ or $H=7$, and for frontier exploration. The lines with the markers show the mean area explored over 5 runs, while the horizontal bars indicate the 95\% confidence intervals.}
\label{fig:library_results}
\end{figure}

We remark that the results could be different if the robot's starting position were to be changed, as the location to explore first would be different.
For the proposed method, this is due to a different set of feasible local trajectories considered.
For pure frontier exploration, the first location to explore depends on the map information obtained from sensor readings at the starting position.
If the starting position is changed such that the map information remains roughly the same, the same exploration location would likely be selected.

Overall, the experiments show that in the environment studied, combining non-myopic POMDP based planning with frontier exploration improves performance over pure frontier exploration.
Although frontier exploration can apply all global information to select the next exploration target, it performs the selection in a heuristic manner, ignoring  the expected information gain.
POMDP based methods can evaluate the information available locally within the optimization horizon to select exploration targets that lead to quantifiably greater expected information gain.
The weakness of the method, susceptibility to local minima due to the finite horizon, can be alleviated by combining it with traditional frontier exploration.
\section{Conclusion}
\label{sec:conclusion}
We formulated a robotic exploration problem as a partially observable Markov decision process (POMDP), applying mutual information as reward function.
We solved an open loop approximation to the POMDP applying forward simulation and the receding horizon control principle.
We derived a new sampling-based approximation for mutual information, and presented an efficient method to draw samples for the approximation when an occupancy grid map representation is applied.

The usefulness of non-myopic decision-theoretic planning for exploration was demonstrated in simulation and real world experiments.
Non-myopic planning can help avoid situations where initial information gain related to a control action seems high, but ultimately the action leads to a dead-end where further exploration is not possible.
However, basing exploration decisions purely on a finite horizon look-ahead was found to perform poorly in situations where a sequence of control actions longer than the look-ahead horizon is required to reach an unexplored area.
To alleviate this susceptibility to local minima, we suggested combining POMDP based and frontier based exploration.
Experimental results show that, depending on the environment to be explored, this combination can improve exploration performance when compared to only applying frontier exploration.

There are two main weaknesses in our approach.
First, an open loop approximation was required to deal with the very large and possibly continuous action and observation spaces, resulting in a performance loss compared to the optimal closed loop solution.
Secondly, to increase computational effectiveness we ignore the SLAM problem while solving the open loop control problem.
Thus if the possibility of losing the consistency of the real process SLAM filter cannot be ignored~\citep{Blanco2008, Carlone2010}, actions that lead to such a failure state may be selected.

More prior information about the environment was noted to improve exploration performance.
In future work, a database of typical map features for different types of environments, for instance, office, outdoor, etc.\, could be maintained, and map samples drawn from the database instead, see e.g.\ \cite{Strom2015}.
As forward simulation methods are applicable to very general underlying models, more realistic environment models that lift the assumption of spatial independence between occupancy grid cells could be applied, for instance higher order Markov random fields~\citep{Nabbe2007}.

\section*{Acknowledgments}
We thank Mr. Joonas Melin and Mr. Eero Hein\"{a}nen for their support in executing the experimental work.
This work was partly supported by the TUT Graduate School and TUT Strategic funding for robotics and intelligent machines.

\bibliographystyle{IEEEtranN}
\bibliography{ref}

\end{document}